\theoremstyle{plain}
\newtheorem{theorem}{Theorem}[section]
\newtheorem{lemma}[theorem]{Lemma}
\newtheorem{corollary}[theorem]{Corollary}
\theoremstyle{definition}
\theoremstyle{remark}
\newcommand{\argmin}{\mathop{\arg\min}}
\begin{document}

\twocolumn[
\icmltitle{Weakly Supervised Regression with Interval Targets}



\icmlsetsymbol{equal}{*}

\begin{icmlauthorlist}
\icmlauthor{Xin Cheng}{cqu}
\icmlauthor{Yuzhou Cao}{ntu}
\icmlauthor{Ximing Li}{jlu}
\icmlauthor{Bo An}{ntu}
\icmlauthor{Lei Feng}{ntu}
\end{icmlauthorlist}

\icmlaffiliation{cqu}{College of Computer Science, Chongqing University, China}
\icmlaffiliation{ntu}{School of Computer Science and Engineering, Nanyang Technological University, Singapore}
\icmlaffiliation{jlu}{College of Computer Science and Technology, Jilin University, China}

\icmlcorrespondingauthor{Lei Feng}{lfengqaq@gmail.com}

\icmlkeywords{Machine Learning, ICML}

\vskip 0.3in
]



\printAffiliationsAndNotice{}  

\begin{abstract}
This paper investigates an interesting weakly supervised regression setting called \emph{regression with interval targets} (RIT). Although some of the previous methods on relevant regression settings can be adapted to RIT, they are not \emph{statistically consistent}, and thus their empirical performance is not guaranteed. In this paper, we provide a thorough study on RIT. First, we proposed a novel statistical model to describe the data generation process for RIT and demonstrate its validity. Second, we analyze a simple selection method for RIT, which selects a particular value in the interval as the target value to train the model. Third, we propose a statistically consistent limiting method for RIT to train the model by limiting the predictions to the interval. We further derive an estimation error bound for our limiting method. Finally, extensive experiments on various datasets demonstrate the effectiveness of our proposed method.
\end{abstract}

\section{Introduction}
\emph{Regression} is a significantly important task in machine learning and statistics \cite{stulp2015many,uysal1999overview}. The goal of the regression task is to learn a predictive model from a given set of training examples, where each training example consists of an instance (or feature vector) and a \emph{real-valued target}. Conventional supervised regression normally requires a vast amount of labeled data to learn an effective regression model with excellent performance. However, it could be difficult to obtain \emph{fully supervised} training examples due to the high cost of data labeling in real-world applications. To alleviate this problem, many weakly supervised regression settings have been investigated, such as semi-supervised regression \cite{li2017learning,wasserman2007statistical,kostopoulos2018semi}, multiple-instance regression \cite{amar2001multiple,wang2011mixture,park2020bayesian}, uncoupled regression \cite{carpentier2016learning,xu2019uncoupled}, and regression with noisy targets \cite{ristovski2010regression,hu2020simple}.

This paper investigates another interesting weakly supervised regression setting called \emph{regression with interval targets} (RIT). For RIT, we aim to learn a regression model from weakly supervised training examples, each annotated with \emph{only an interval that contains the true target value}. The learned regression model in this setting is expected to predict the target value of any test instance as accurately as possible. In many real-world scenarios, it is difficult to collect the exact true target value, while it could be easy to provide an interval in which the true target value is contained. A typical example is facial age estimation \cite{geng2013facial}. In Figure \ref{fig1}, there are two photos of Ballon d'Or King Pele at the age of 58. The fully supervised regression task requires the exact age of Pele, which is quite difficult to provide because it is common for a person to look the same over a long period of time. However, we can easily get an age interval that contains the true age of Pele. Based on the facial wrinkles, we can determine that the true age is at least 40 but not more than 70. In reality, many regression tasks face this challenge (especially in size/length/age estimation), where it is costly or impossible to obtain a true target value.
\begin{figure}[!t]
\centering
\includegraphics[width=0.47\textwidth]{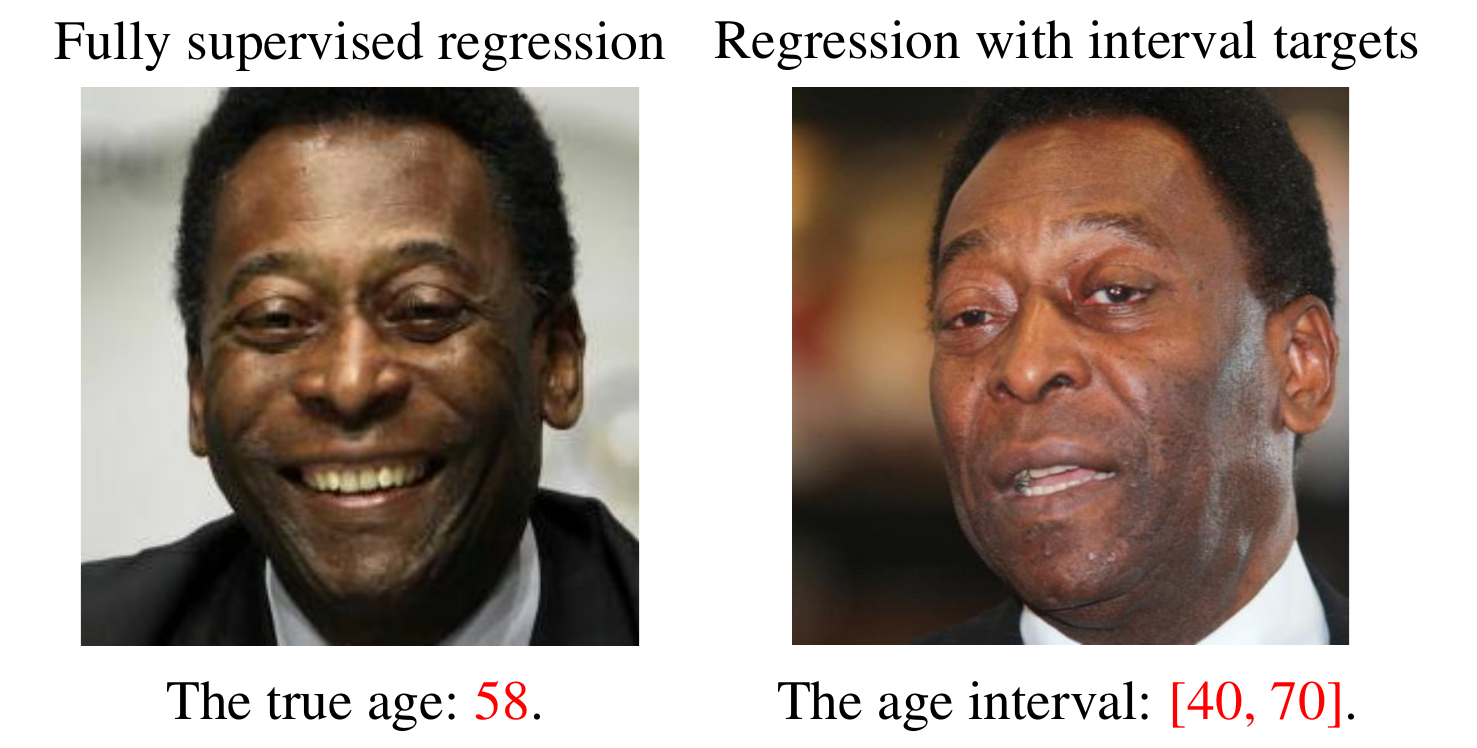}
\caption{An example of facial age estimation. Both photos were taken when Pele was 58 years old.}
\label{fig1} 
\end{figure}

Our studied RIT is highly related to \emph{interval-valued data prediction} (IVDP) \cite{ishibuchi1991extension,neto2008centre,neto2010constrained,fagundes2014interval}. IVDP allows each training example to be annotated with an interval and learns a regression model to \emph{predict the interval} containing the true target value of a test instance. There are even some studies that allow features to be intervals as well \cite{manski2002inference,yang2018l1,sadeghi2019efficient}. It is worth noting that IVDP aims to predict the interval that contains the target value, while our studied RIT aims to predict the true target value. 
Although some of the above methods could be adapted to our studied RIT settings, they are not \emph{statistically consistent} (i.e., the learned model is infinite-sample consistent to the optimal model), and thus the empirical performance is not guaranteed.

In this paper, we provide a thorough study on RIT, and the main contributions can be summarized as follows:
\begin{itemize}[leftmargin=0.4cm,topsep=-1pt]
\item We propose a novel statistical model to describe the data generation process for RIT and demonstrate its validity. Having an explicit data distribution helps us understand how data with interval targets are generated.
\item We analyze a simple selection method for RIT, which selects a particular value in the interval as the target value to train the model. We show that this intuitive method could work well if the middlemost value in the interval is taken as the target value.
\item We propose a \emph{statistically consistent} limiting method for RIT to train the model by limiting the predictions of the model to the interval. We further derive an \emph{estimation error bound} for our limiting method.
\end{itemize}
Extensive experiments on various datasets demonstrate the effectiveness of our proposed method.
\section{Related Work}
\noindent\textbf{Regression.} For the ordinary regression problem, let the feature space be $\mathcal{X}\in\mathbb{R}^d$ and the label space be $\mathcal{Y}\in\mathbb{R}$. Let us denote by $(\bm{x},y)$ an example including an instance $x$ and a real-valued true label $y$. Each example $(\bm{x},y)\in\mathcal{X}\times\mathcal{Y}$ is assumed to be independently sampled from an unknown data distribution with probability density $p(\bm{x},y)$. For the regression task, we aim to learn a model $f:\mathcal{X}\mapsto\mathbb{R}$ that tries to minimize the following expected risk:
\begin{gather}
\label{expected_regression}
R(f)=\mathbb{E}_{p(\bm{x},y)}[\ell(f(\bm{x}),y)],
\end{gather}
where $\mathbb{E}_{p(\bm{x},y)}$ denotes the expectation over the distribution $p(\bm{x},y)$ and $\ell:\mathbb{R}\times\mathbb{R}\mapsto\mathbb{R}_+$ is a conventional loss function (such as mean squared error and mean absolute error) for regression, which measures how well a model estimates a given real-valued label.

\noindent\textbf{Interval-valued data prediction.}
In order to consider multiple types of data, such as intervals, weights, and characters, symbolic data analysis \cite{bock1999analysis,billard2006symbolic} has been extensively investigated. 
As a specific task of symbolic data analysis, the purpose of \emph{interval-valued data prediction} (IVDP) is to learn an interval predictor from training data annotated with intervals. The challenge of IVDP is mainly to construct a model that outputs intervals (to ensure that the interval holds, e.g., $[0,10]$ instead of $[10,0]$). In statistics, \citet{billard2000regression} introduced a central tendency for interval data. \cite{lauro2000principal} introduced principal component analysis methods for interval data. In addition, by designing proper loss functions, network structures, or output constraints, neural networks can also be trained to output intervals for IVDP \cite{neto2008centre,neto2010constrained,giordani2015lasso,yang2019interval,sadeghi2019efficient}.

\noindent\textbf{Regression with interval-censored data.}
Another related setting is \emph{regression with interval-censored data} (RICD) \cite{rabinowitz1995regression,lindsey1998methods,lesaffre2005overview,sun2006statistical}, which aims to learn a survival function from interval-censored data. In a sequence of time points, a specific event (e.g., machine breakdown, disease attack, death) occurs between two time points, and the interval formed by these two time points is called interval censoring. RICD was widely used in survival analysis \cite{machin2006survival,kleinbaum2012survival,wang2019machine}. In contrast to our studied RIT setting, RICD aims to obtain a survival function to estimate the occurring probability of an event, instead of learning a predictive model. 
\section{Regression with Interval Targets}
\noindent\textbf{Notations.} Suppose the given training set is denoted by $\{(\bm{x}_i,S_i)\}_{i=1}^n$ where $S_i$ represents the interval $[\underline{y}_i,\overline{y}_i] \in \mathcal{Y}\times \mathcal{Y}$ assigned to the instance $\bm{x} \in \mathcal{X}$, and $|S_i|=\overline{y}_i-\underline{y}_i$ represents the size of the interval $S_i$. Each training example $(\bm{x}_i, S_i)$ is assumed to be sampled from an unknown joint distribution with probability density $\tilde{p}(\bm{x}, S)$. In this setting, the true label $y_i\in\mathcal{Y}$ of the instance $\bm{x}_i$ is guaranteed to be contained in the interval $S_i$. The goal of interval regression is to induce a regression model $f:\mathcal{X}\mapsto\mathbb{R}$ that can accurately predict the target value of a test instance. Interestingly, this setting can be considered as a generalized setting of ordinary regression, because we can easily convert the ordinary regression example $(\bm{x}_i,y_i)$ to an interval regression example by rewriting $y_i$ as the interval $[\underline{y}_i,\overline{y}_i]$ where $\underline{y}_i=\overline{y}_i=y_i$. In this paper, we denote by $p(\cdot)$ the probability density and $\mathrm{Pr}[\cdot]$ the occurring probability.

\noindent\textbf{Small ambiguity degree.} For ensuring that RIT is learnable (i.e., the true target value concealed in the interval is distinguishable), we assume that RIT should satisfy the small ambiguity degree condition \cite{cour2011learning}, where the ambiguity degree in our setting is defined as
\begin{gather}
\nonumber
\lambda = \sup_{(\bm{x},y)\sim p(\bm{x},y),(\bm{x},S)\sim \tilde{p}(\bm{x},S),y' \in \mathcal{Y},y' \neq y} \mathrm{Pr}[y' \in S].
\end{gather}
The ambiguity degree $\lambda$ is the maximum probability of a specific incorrect target $y'$ co-occurring with the true target $y$ in the same interval $S$. We can observe that when $\lambda=1$, the incorrect target $y'$ always appears with the true target $y$ together, and thus we can no longer distinguish which one is the true target. Therefore, the RIT setting requires to assume that the small ambiguity degree condition is satisfied (i.e., $\lambda < 1$), in order to ensure that this setting is learnable.

\subsection{Data Generation Process}\label{data_gen_sec}
To avoid the sampled intervals being unreasonable (i.e., the size is unexpectedly large), we use $q$ to denote the maximum allowed interval size $|S|$. Then, we assume that each example $(\bm{x},S)$ with $S=[\underline{y},\overline{y}]$ is independently sampled from a probability distribution with the following density:
\begin{gather}
\label{equ2}
\tilde{p}(x,S)=\int_{\underline{y}}^{\overline{y}}p(S | y)p(\bm{x},y) \mathrm{d}y,
\end{gather}
where
\begin{gather}
\label{equ3}
p(S|y)=\left\{
	\begin{aligned}
	\frac{2}{q^2}&, \quad y\in S\ \mathrm{and}\ |S|\leq q,\\
	0&, \quad \mathrm{otherwise}.
	\end{aligned}
	\right
	.
\end{gather}
In Eq.~(\ref{equ2}), we assume $p(S|y,\bm{x}) = p(S|y)$, which means that given the true label $y$, the interval $S$ is independent of the instance $\bm{x}$. Such a class-dependent and instance-independent assumption was widely adopted by many previous studies in the weakly supervised learning field \cite{patrini2017making,ishida2019complementary,feng2020provably}. In Eq.~(\ref{equ3}), we assume that given a specific label $y$, all possible intervals $\{S | y \in S, |S| \leq q\}$ are uniformly sampled. 

We show that our presented joint distribution $\tilde{p}(\bm{x},S)$ is a valid probability distribution by the following theorem.
\begin{theorem}
\label{thm1}
The following equality holds:
\begin{equation}
\int_{\mathcal{S}}\int_{\mathcal{X}}\tilde{p}(\bm{x},S)\mathrm{d}\bm{x}\mathrm{d}S = 1.
\end{equation}
\end{theorem}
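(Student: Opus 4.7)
The plan is to unfold the definition of $\tilde{p}(\bm{x},S)$, swap the order of integration via Fubini's theorem, and then reduce everything to verifying that the conditional density $p(S\mid y)$ integrates to $1$ over the space of valid intervals $\mathcal{S}$ for each fixed $y$.

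First I would substitute Eq.~(\ref{equ2}) into the claim and rewrite
\begin{equation*}
\int_{\mathcal{S}}\!\!\int_{\mathcal{X}}\tilde{p}(\bm{x},S)\,\mathrm{d}\bm{x}\,\mathrm{d}S
=\int_{\mathcal{S}}\!\!\int_{\mathcal{X}}\!\!\int_{\underline{y}}^{\overline{y}} p(S\mid y)\,p(\bm{x},y)\,\mathrm{d}y\,\mathrm{d}\bm{x}\,\mathrm{d}S.
\end{equation*}
The restriction $y\in[\underline{y},\overline{y}]$ can be absorbed into the indicator $\mathbf{1}[y\in S]$ that is already implicit in the definition of $p(S\mid y)$, so Fubini applies and I would reorder the integrals to push $\bm{x}$ inside first and $y$ outside. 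Marginalising gives $\int_{\mathcal{X}} p(\bm{x},y)\,\mathrm{d}\bm{x} = p(y)$, reducing the claim to
\begin{equation*}
\int_{\mathcal{Y}} p(y)\left(\int_{\mathcal{S}} p(S\mid y)\,\mathrm{d}S\right)\mathrm{d}y = 1.
\end{equation*}
Since $p(y)$ is already a probability density on $\mathcal{Y}$, it now suffices to prove that $\int_{\mathcal{S}} p(S\mid y)\,\mathrm{d}S = 1$ for every fixed $y$.

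The main (and only non-routine) step is this inner computation, which requires carefully parameterising the set of valid intervals. I would identify $S=[\underline{y},\overline{y}]$ with the pair $(\underline{y},\overline{y})$ and observe that the support of $p(\,\cdot\mid y)$ consists of all pairs satisfying $\underline{y}\le y\le\overline{y}$ and $\overline{y}-\underline{y}\le q$. Fixing $\underline{y}\in[y-q,y]$, the upper endpoint then ranges over $\overline{y}\in[y,\underline{y}+q]$, and the integral becomes
\begin{equation*}
\int_{y-q}^{y}\int_{y}^{\underline{y}+q}\frac{2}{q^{2}}\,\mathrm{d}\overline{y}\,\mathrm{d}\underline{y}
=\frac{2}{q^{2}}\int_{y-q}^{y}(\underline{y}+q-y)\,\mathrm{d}\underline{y}.
\end{equation*}
After the substitution $u=\underline{y}+q-y$, the right-hand side evaluates to $\frac{2}{q^{2}}\cdot\frac{q^{2}}{2}=1$, which is exactly what the constant $2/q^{2}$ in Eq.~(\ref{equ3}) was designed to produce. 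Substituting back yields $\int p(y)\,\mathrm{d}y = 1$, completing the proof.

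The only subtlety I anticipate is making the parameterisation of $\mathcal{S}$ unambiguous: one must commit to viewing the integral $\int_{\mathcal{S}}\mathrm{d}S$ as a Lebesgue integral on $\{(\underline{y},\overline{y})\in\mathcal{Y}^{2}:\underline{y}\le\overline{y}\}$, and check that the support conditions $y\in S$ and $|S|\le q$ carve out precisely the triangular region described above. Once that is pinned down, the remaining steps are elementary manipulations of iterated integrals.
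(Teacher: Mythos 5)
Your proposal is correct and follows essentially the same route as the paper's proof: both swap the order of integration, parameterise the valid intervals for a fixed $y$ as the triangular region $\underline{y}\in[y-q,y]$, $\overline{y}\in[y,\underline{y}+q]$ with area $q^{2}/2$, and cancel this against the normalising constant $2/q^{2}$. The only cosmetic difference is that you marginalise out $\bm{x}$ first and phrase the key step as $\int_{\mathcal{S}}p(S\mid y)\,\mathrm{d}S=1$, while the paper keeps $p(\bm{x},y)$ in place and phrases it as $|\mathcal{S}_{q}^{y}|=q^{2}/2$; the computation is identical.
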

In addition to proving that $\tilde{p}(\bm{x},S)$ is a valid probability distribution, we also need to verify that $\tilde{p}(\bm{x},S)$ meets the key requirement of RIT, i.e., the true target $y$ is guaranteed to be contained in the interval $S$ for every example $(\bm{x},S)$ sampled from $\tilde{p}(\bm{x},S)$. The following theorem provides an affirmative answer to this question.

\begin{theorem}
\label{key}
For any interval example $(\bm{x},S)$ independently sampled from the assumed data distribution $\tilde{p}(\bm{x},S)$ defined in Eq.~(\ref{equ2}), the true target $y$ is always in the interval $S$, i.e., $\mathrm{Pr}[y\in S|\bm{x},S]=1$, $\forall(\bm{x},S) \backsim\tilde{p}(\bm{x},S)$.
\end{theorem}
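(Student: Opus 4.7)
The plan is to reduce the statement to a direct computation using Bayes' rule, together with the key structural property of $p(S\mid y)$ in Eq.~(\ref{equ3}): namely, that $p(S\mid y)$ vanishes whenever $y\notin S$. Once this is observed, the posterior $p(y\mid\bm{x},S)$ is automatically supported on $S$, and integrating it over $S$ must give $1$.

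Concretely, I would first write
\begin{equation*}
\mathrm{Pr}[y\in S\mid\bm{x},S]=\int_{\underline{y}}^{\overline{y}}p(y\mid\bm{x},S)\,\mathrm{d}y.
\end{equation*}
Next I would invoke Bayes' rule together with the class-dependent, instance-independent assumption $p(S\mid y,\bm{x})=p(S\mid y)$ stated just after Eq.~(\ref{equ2}) to rewrite
\begin{equation*}
p(y\mid\bm{x},S)=\frac{p(S\mid y,\bm{x})\,p(\bm{x},y)}{\tilde{p}(\bm{x},S)}=\frac{p(S\mid y)\,p(\bm{x},y)}{\tilde{p}(\bm{x},S)}.
\end{equation*}
Substituting this expression back gives
\begin{equation*}
\mathrm{Pr}[y\in S\mid\bm{x},S]=\frac{1}{\tilde{p}(\bm{x},S)}\int_{\underline{y}}^{\overline{y}}p(S\mid y)\,p(\bm{x},y)\,\mathrm{d}y,
\end{equation*}
and the numerator is, by the very definition of $\tilde{p}(\bm{x},S)$ in Eq.~(\ref{equ2}), exactly $\tilde{p}(\bm{x},S)$, so the ratio equals $1$.

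The only subtlety worth spelling out is the interaction with Eq.~(\ref{equ3}): one might worry that the integral $\int_\mathcal{Y} p(y\mid\bm{x},S)\,\mathrm{d}y$ over the entire label space $\mathcal{Y}$ (which must equal $1$) could pick up contributions from $y\notin S$, which would contradict $\mathrm{Pr}[y\in S\mid\bm{x},S]=1$. This is precisely ruled out by Eq.~(\ref{equ3}), which forces $p(S\mid y)=0$ whenever $y\notin[\underline{y},\overline{y}]$, so the integrand $p(S\mid y)p(\bm{x},y)$ vanishes off $S$ and the integration over $\mathcal{Y}$ coincides with the integration over $[\underline{y},\overline{y}]$. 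I do not anticipate any real obstacle; the proof is essentially a one-line application of Bayes' rule, and the main thing to take care of is to state clearly why the support restriction from Eq.~(\ref{equ3}) allows us to replace $\int_\mathcal{Y}$ with $\int_{\underline{y}}^{\overline{y}}$ without changing the value of the integral.
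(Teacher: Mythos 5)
Your proof is correct and follows essentially the same route as the paper's: both apply Bayes' rule with the assumption $p(S\mid y,\bm{x})=p(S\mid y)$ and then invoke the support restriction in Eq.~(\ref{equ3}) ($p(S\mid y)=0$ for $y\notin S$) to conclude. The only cosmetic difference is that the paper writes $\mathrm{Pr}[y\in S\mid\bm{x},S]=1-\mathrm{Pr}[y\notin S\mid\bm{x},S]$ and shows the complementary integral vanishes, whereas you show directly that the numerator $\int_{\underline{y}}^{\overline{y}}p(S\mid y)p(\bm{x},y)\,\mathrm{d}y$ equals the normalizer $\tilde{p}(\bm{x},S)$ by the very definition in Eq.~(\ref{equ2}); your handling of why $\int_{\mathcal{Y}}$ can be replaced by $\int_{\underline{y}}^{\overline{y}}$ is exactly the point the paper's last step relies on.
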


\subsection{Real-World Motivation}
Here, we give a real-world motivation for the assumed data distribution $\tilde{p}(\bm{x},S)$. For the data annotations in the regression task, it could be difficult to directly provide the exact true target value for each instance.
Fortunately, it would be easier if the annotation system can randomly generate an interval and ask annotators whether the true target value is contained in the generated interval or not. Given an instance $\bm{x}$, the maximum size $q$ of the interval, and the maximum and minimum values $y_{\mathrm{max}}, y_{\mathrm{min}}$ of the label space $\mathcal{Y}$, suppose the annotation system randomly and uniformly samples $\underline{y}$ and $\overline{y}$ from the interval $[y_{\mathrm{min}}-q,y_{\mathrm{max}}+q]$ ($\pm q$ is to ensure that all possible $y$ have the same possible number of intervals) to generate an interval $S=[\underline{y},\overline{y}]$. If the sampled $\underline{y}$ and $\overline{y}$ satisfy the two conditions: $\overline{y} - \underline{y} \leq q$ and the true target value $y$ of $\bm{x}$ belongs to $[\underline{y},\overline{y}]$ i.e., $y \in [\underline{y},\overline{y}]$, then we collect an interval regression example $(\bm{x},S)$ where $S=[\underline{y},\overline{y}]$, otherwise we discard the interval $S$ for the instance $\bm{x}$. In this way, each collected interval regression example $(\bm{x},S)$ exactly follows the data distribution defined in Eq.~(\ref{equ2}). We will demonstrate this argument below.

We start by considering the case where the annotation system has discarded all intervals larger than a given maximum interval value $q$. Then we have the following lemma.
\begin{lemma}
\label{lem1}
Given the maximum value $q$ allowed for the interval size, and the maximum value $y_{\mathrm{max}}$ and the minimum value $y_{\mathrm{min}}$ of the label space $\mathcal{Y}$, for any instance $\bm{x}$ with its true target $y$ and any interval $S$ with size no greater than $q$ (i.e., $|S|\leq q$), the following equality holds:
\begin{gather}
\label{le}
\mathrm{Pr}[y \in S|\bm{x}] = \frac{q}{2(y_{\mathrm{max}}-y_{\mathrm{min}})+q}.
\end{gather}
\end{lemma}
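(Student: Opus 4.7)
The plan is to reduce the probability to a purely geometric ratio of areas in $(\underline{y},\overline{y})$-space. Because the annotation scheme described in Section~\ref{data_gen_sec} draws $\underline{y}$ and $\overline{y}$ uniformly on $[y_{\mathrm{min}}-q, y_{\mathrm{max}}+q]$ and independently of $\bm{x}$, the conditional law of $(\underline{y},\overline{y})$ given the size-filter event $|S|\le q$ is uniform on the admissible region $R = \{(\underline{y},\overline{y}) : y_{\mathrm{min}}-q \leq \underline{y} \leq \overline{y} \leq y_{\mathrm{max}}+q,\ \overline{y}-\underline{y} \leq q\}$. Hence $\mathrm{Pr}[y\in S \mid \bm{x}]$ equals $\mathrm{Area}(R_y)/\mathrm{Area}(R)$, where $R_y = \{(\underline{y},\overline{y}) \in R : \underline{y} \leq y \leq \overline{y}\}$, and the entire task reduces to evaluating these two areas and simplifying.

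First I would compute $\mathrm{Area}(R_y)$. Changing coordinates to $u = y - \underline{y}$ and $v = \overline{y} - y$, the joint constraints $\underline{y}\le y\le\overline{y}$ and $\overline{y}-\underline{y}\le q$ turn into the triangle $\{u,v \geq 0,\ u+v \leq q\}$, of area $q^2/2$. The key role of the $\pm q$ padding becomes visible here: because $u,v \leq q$ and $y \in [y_{\mathrm{min}}, y_{\mathrm{max}}]$, the implied bounds $\underline{y} \geq y-q \geq y_{\mathrm{min}}-q$ and $\overline{y} \leq y+q \leq y_{\mathrm{max}}+q$ hold automatically, so the triangle is never truncated by the outer sampling window. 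Consequently $\mathrm{Area}(R_y) = q^2/2$ independently of $y$, which is the symmetry underlying the lemma.

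For the denominator I would parametrise $R$ by $(\underline{y}, w)$ with $w = \overline{y}-\underline{y} \in [0,q]$, integrate out $\underline{y}$ over its admissible range for each fixed $w$, and then integrate in $w$ to obtain $\mathrm{Area}(R)$. Dividing $\mathrm{Area}(R_y)$ by $\mathrm{Area}(R)$ and simplifying yields the claimed value $q\bigl/\bigl(2(y_{\mathrm{max}}-y_{\mathrm{min}})+q\bigr)$. The main obstacle is bookkeeping rather than any deep technical step: one must cleanly separate the two rejection conditions ($|S|>q$ versus $y\notin S$) so that only the former participates in the conditioning, and one must verify carefully that no part of the favourable triangle is clipped by the padded boundary---which is precisely why the $\pm q$ extension was introduced in the first place.
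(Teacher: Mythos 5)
Your reduction to a ratio of areas is the right idea, and your numerator matches the paper exactly: the set of intervals of size at most $q$ containing $y$ has area $q^2/2$, and your observation that the $\pm q$ padding prevents this triangle from being clipped is precisely the paper's stated reason for the padding. The problem is the denominator. With $R$ as you define it --- all pairs $y_{\mathrm{min}}-q \le \underline{y}\le\overline{y}\le y_{\mathrm{max}}+q$ with $\overline{y}-\underline{y}\le q$ --- a direct computation gives $\mathrm{Area}(R) = q(y_{\mathrm{max}}-y_{\mathrm{min}}) + \tfrac{3}{2}q^2$: the strip over $\underline{y}\in[y_{\mathrm{min}}-q,\,y_{\mathrm{max}}]$ contributes $q(y_{\mathrm{max}}-y_{\mathrm{min}})+q^2$ and the corner over $\underline{y}\in[y_{\mathrm{max}},\,y_{\mathrm{max}}+q]$ contributes $q^2/2$. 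The resulting ratio is $q\bigl/\bigl(2(y_{\mathrm{max}}-y_{\mathrm{min}})+3q\bigr)$, not the claimed $q\bigl/\bigl(2(y_{\mathrm{max}}-y_{\mathrm{min}})+q\bigr)$, so your final step ``simplifying yields the claimed value'' does not go through as written.

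The paper's normalizer is $|\mathcal{S}_q| = \tfrac{1}{2}q^2 + q(y_{\mathrm{max}}-y_{\mathrm{min}})$, obtained by parametrizing the admissible intervals as $\underline{y}\in[y_{\mathrm{min}}-q,y_{\mathrm{min}}]$ with $\overline{y}\in[y_{\mathrm{min}},\underline{y}+q]$, plus $\underline{y}\in[y_{\mathrm{min}},y_{\mathrm{max}}]$ with $\overline{y}\in[\underline{y},\underline{y}+q]$; that is, it tacitly excludes intervals lying entirely below $y_{\mathrm{min}}$ or entirely above $y_{\mathrm{max}}$. These exclusions are exactly the two corner triangles of area $q^2/2$ each that account for the extra $q^2$ in your $R$. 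To recover the stated constant you would need a further restriction --- $S\cap[y_{\mathrm{min}},y_{\mathrm{max}}]\neq\emptyset$, i.e.\ the interval space $\mathcal{S}$ contains only intervals meeting the label range --- to enter the conditioning alongside $|S|\le q$, which is the opposite of your stated plan of letting only the size filter participate in the conditioning. Aside from this, the remainder of your argument (the numerator, the change of variables, the role of the padding, and the final integration over $y$, which the paper also performs) is consistent with the paper's proof, which otherwise differs only in packaging the same area ratio inside a chain of conditional-probability identities.
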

In the case of no additional information, we can only choose the interval randomly, so the above probability is uniform. When the maximum value $q$ allowed in the interval increases, the probability in Eq.~(\ref{le}) will increase, which is in line with our knowledge because a larger interval is more likely to contain the true value $y$. Similarly, the larger the space ($y_{\mathrm{max}}-y_{\mathrm{min}}$) allowed for sampling, the more difficult it is to obtain an interval containing the true label $y$. Based on lemma \ref{lem1}, we have the following theorem.
\begin{theorem}
\label{thm2}
Under the same setting of Lemma \ref{lem1}, the distribution of collected data where the true label $y \in \mathcal{Y}$ of an example $\bm{x}$ belongs to the interval $S$ is the same as Eq.~(\ref{equ2}), i.e.,
\begin{gather}
\nonumber
p(x,S | y \in S) = \tilde{p}(x,S).
\end{gather}
\end{theorem}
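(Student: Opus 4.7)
The plan is to view $p(\bm{x},S\mid y\in S)$ as a Bayes'-rule posterior over the acceptance event $\{y\in S\}$, and then let \Cref{lem1} supply the normalizing constant that makes the posterior match $\tilde{p}(\bm{x},S)$ exactly.

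First, I would write down the prior joint density of the annotation triple $(\bm{x},y,S)$ before any filtering. The pair $(\bm{x},y)$ is drawn from $p(\bm{x},y)$, while the endpoints $(\underline{y},\overline{y})$ are generated by the annotation system independently of $(\bm{x},y)$; denote the resulting prior density on $S$ by $p_{0}(S)$, which is constant on the support $\{(\underline{y},\overline{y}):\underline{y}\le\overline{y},\;\underline{y},\overline{y}\in[y_{\min}-q,y_{\max}+q]\}$. Because of this independence, the prior joint density factors as $p(\bm{x},y)\,p_{0}(S)$.

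Second, I would apply Bayes' rule to the acceptance event and integrate out $y$:
\begin{equation*}
p(\bm{x},S\mid y\in S)=\frac{p_{0}(S)\,\mathbf{1}[|S|\le q]\displaystyle\int_{S}p(\bm{x},y)\,\mathrm{d}y}{\Pr[y\in S]}.
\end{equation*}
The indicator $\mathbf{1}[|S|\le q]$ appears because the annotation system discards every sample with $|S|>q$. For the denominator I would write $\Pr[y\in S]=\int p(\bm{x})\Pr[y\in S\mid\bm{x}]\,\mathrm{d}\bm{x}$ and then invoke \Cref{lem1}, which tells us $\Pr[y\in S\mid\bm{x}]=\frac{q}{2(y_{\max}-y_{\min})+q}$ uniformly in $\bm{x}$, so the denominator collapses to that same constant.

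Third, I would compare coefficients with $\tilde{p}(\bm{x},S)=\int_{S}p(S\mid y)p(\bm{x},y)\,\mathrm{d}y=\frac{2}{q^{2}}\mathbf{1}[|S|\le q]\int_{S}p(\bm{x},y)\,\mathrm{d}y$ using \Cref{equ3}. Since both expressions have the same dependence on $\bm{x}$ and $S$ through the inner integral and the indicator, the identity reduces to checking the scalar equality $p_{0}(S)/\Pr[y\in S]=2/q^{2}$, which is a direct consequence of the uniform-endpoint sampling scheme together with \Cref{lem1}. The main obstacle, and really the only nontrivial step, is this constant-tracking: making the factor coming from the uniform endpoint density on $[y_{\min}-q,y_{\max}+q]^{2}$ cancel the normalizer from \Cref{lem1} to leave exactly $2/q^{2}$; everything else is formal manipulation of conditional densities. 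Once this scalar check is done, we recover $p(\bm{x},S\mid y\in S)=\tilde{p}(\bm{x},S)$, and on $|S|>q$ both sides vanish, completing the proof.
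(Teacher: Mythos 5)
Your proposal is correct and takes essentially the same route as the paper: both apply Bayes' rule to the acceptance event, use the uniform prior over candidate intervals, and let Lemma~\ref{lem1} supply the normalizing constant so that everything cancels to the factor $2/q^2$ in front of $\int_{S}p(\bm{x},y)\,\mathrm{d}y$. The scalar check you defer is exactly the paper's computation of the admissible-interval measure $|\mathcal{S}_q|=\tfrac{1}{2}q^2+q(y_{\mathrm{max}}-y_{\mathrm{min}})$ and it does go through; just be consistent about which prior the denominator is taken under (with your unfiltered $p_0$ and the indicator $\mathbf{1}[|S|\le q]$ in the numerator, the denominator must be $\mathrm{Pr}[y\in S,\ |S|\le q]$ rather than $\mathrm{Pr}[y\in S]$, which again yields $2/q^2$).
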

Theorem \ref{thm2} clearly demonstrates that our assumed data distribution $\tilde{p}(\bm{x},y)$ exactly accords with the real-world motivation introduced above.
\section{The Proposed Methods}
In this section, we introduce a simple method that selects a \emph{particular value} in the interval as the target value to train a regression model. This method is simple and intuitive, but it only considers a single value in the interval and ignores the overall interval information. To overcome this drawback, we propose a limiting method that limits the predicted value of the model to be in the interval, which is \emph{statistically consistent} under a very mild condition.

\subsection{The Simple Selection Method}\label{select_sec}
Given an interval, an intuitive solution is to select a particular value in the interval as the target value:
\begin{gather}
\label{one}
\ell_{\mathrm{sel}}(f(\bm{x}), S) =\ell(f(\bm{x}), y^\prime), \mathrm{where}\ y^\prime \in S.
\end{gather}
As shown in Eq.~(\ref{one}), this method aims to select one value in the interval as the target value and regard the loss of this value as the predictive loss for the interval regression example $(\bm{x}, S)$. This simple method has an obvious drawback, i.e., selecting only one value in the interval ignores the influence of other values in the interval. Intuitively, the selection strategy has a significant impact on the final performance of the trained model. Here, we provide three typical strategies to select a particular value in the interval:
\begin{itemize}
\item Selecting the leftmost value:
\begin{gather}
\label{left}
\ell_{\mathrm{left}}(f(\bm{x}), S) =\ell(f(\bm{x}), \underline{y}).
\end{gather}
\item Selecting the rightmost value:
\begin{gather}
\label{right}
\ell_{\mathrm{right}}(f(\bm{x}), S) =\ell(f(\bm{x}), \overline{y}).
\end{gather}
\item Selecting the middlemost value:
\begin{gather}
\label{midd}
\ell_{\mathrm{mid}}(f(\bm{x}), S) =\ell(f(\bm{x}), \frac{\underline{y}+\overline{y}}{2}).
\end{gather}
\end{itemize}

Obviously, these three strategies select the three most particular values (including the leftmost value, the rightmost value, and the middlemost value) in the interval. Different selection strategies could result in different errors of estimating the true target value.
Given any interval example $(\bm{x},S)$ with $|S| = a$, we analyze the mean absolute error of estimating the true target $y$ by the three strategies when $y$ falls at any position in the interval. We illustrate this analysis in Table \ref{error_analysis}. As shown in Table \ref{error_analysis}, since any value in the interval could be the true target, we calculate the maximum error and the expected error for each selection strategy.
\begin{table}[!t]
\caption{Error analysis for three selection strategies (with $|S|=a$).}
\label{error_analysis}
\resizebox{0.48\textwidth}{!}{
\setlength{\tabcolsep}{1.5mm}{
\begin{tabular}{c|ccc}
\toprule
Strategy   & Selected value                      & Maximum error & Expected error \\ \midrule
Leftmost   &   $\underline{y}$               & $a$          & $a/2$          \\ 
Rightmost  &   $\overline{y}$               & $a$          & $a/2$          \\  
Middlemost & ${(\underline{y}+\overline{y})}/{2}$ & $a/2$        & $a/4$         
\\
\bottomrule
\end{tabular}
}
}
\end{table}
Clearly, if the true target is the rightmost value (i.e., $y=\overline{y}$), the rightmost selection strategy is optimal and the errors ($a/2$ and $a/4$) of the leftmost and middlemost selection strategies are maximum. If the true target is the middlemost value (i.e., $y=(\underline{y}+\overline{y})/2$), the middlemost selection strategy is optimal and both the leftmost and rightmost selection strategies achieve the error of $a/2$. If the true target is the leftmost value (i.e., $y=\underline{y}$), the leftmost selection strategy is optimal and the errors ($a/2$ and $a/4$) of the rightmost and middlemost selection strategies are maximum. When all the values in the interval have the same probability of being the true target, the expected error of both the leftmost and rightmost selection strategies is $a/2$ and the expected error of the middlemost selection strategy is $a/4$. According to the above analysis, we can find that the middlemost selection strategy is relatively stable and can achieve a smaller error regardless of the true target value. Therefore, the middlemost selection strategy is expected to achieve better performance than the leftmost and the rightmost selection strategies, and our empirical results in Section \ref{experiment_sec} also support this argument.

\noindent\textbf{Further discussion.} In addition to the above middlemost selection strategy, it is natural to consider another strategy from the loss perspective, i.e., the average loss of $\ell_{\mathrm{left}}$ and $\ell_{\mathrm{right}}$. Specifically, for each interval example $(x, S)$, we can define the average loss as $\ell_{\mathrm{avgl}}(f(\bm{x}), S) = \frac{1}{2}\ell(f(\bm{x}), \underline{y}) + \frac{1}{2}\ell(f(\bm{x}), \overline{y})$. We will theoretically analyze this method and show that only with a specific choice of the regression loss $\ell$ (i.e., mean absolute error), $\ell_{\mathrm{avgl}}(f(\bm{x}), S)$ can achieve good empirical performance with theoretical guarantees.

\subsection{The Statistically Consistent Limiting Method}
We can find that the simple selection method only considers a single value in the interval and ignores the overall interval information. To overcome this drawback, we propose the following limiting method that limits the predicted value in the interval:
\begin{gather}
\label{lmfun}
\ell_{\mathrm{LM}}(f(\bm{x}), S) = \mathbb{I}\left[\underline{y}-f(\bm{x})>0\right] + \mathbb{I}\left[f(\bm{x})-\overline{y}>0\right].
\end{gather}
This loss function takes value 0 if $\underline{y}\leq f(\bm{x})\leq \overline{y}$, otherwise 1. This is in line with our intention to limit the predicted values in the interval.
Then, the expected regression risk of our proposed limiting method can be represented as follows:
\begin{align}
\label{risk_lm}
R_{\mathrm{LM}}(f)&=\mathbb{E}_{\tilde{p}(\bm{x}, S)}[\ell_{\mathrm{LM}}(f(\bm{x}),S)].
\end{align}
We demonstrate that our proposed limiting method is \emph{model-consistent}, i.e., the model learned by the limiting method from interval data converges to the optimal model learned from fully supervised data. In particular, we assume that the hypothesis space $\mathcal{F}$ is strong enough \cite{lv2020progressive} such that the optimal model (i.e., $f^\star = \argmin_{f\in\mathcal{F}}{R}(f)$) in the hypothesis space makes the optimal risk equal to 0 (i.e., $R(f^\star)=0$). Then we introduce the following theorem.

\begin{theorem}
\label{op-lm}
Suppose that the hypothesis space $\mathcal{F}$ is strong enough (i.e., $f^\star = \mathrm{argmin}_{f\in\mathcal{F}}{R}(f)$ leads to $R(f^{\star})=0$). The model $f^\star_{\mathrm{LM}}=\mathrm{argmin}_{f\in\mathcal{F}}R_{\mathrm{LM}}(f)$ learned by our limiting method is equivalent to the optimal model $f^\star = \mathrm{argmin}_{f\in\mathcal{F}}{R}(f)$.
\end{theorem}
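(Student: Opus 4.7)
The plan is to establish the claimed equivalence in two steps: (i) show that $f^\star$ is a zero of $R_{\mathrm{LM}}$, so that $R_{\mathrm{LM}}(f^\star_{\mathrm{LM}}) = 0$; and (ii) show that any $f$ with $R_{\mathrm{LM}}(f) = 0$ must satisfy $f(\bm{x}) = y$ almost surely with respect to $p(\bm{x}, y)$. Together these force $f^\star_{\mathrm{LM}} = f^\star$.

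For step (i), the assumption $R(f^\star) = 0$ combined with the non-negativity and strict positivity of the regression loss $\ell$ at $f(\bm{x}) \neq y$ forces $f^\star(\bm{x}) = y$ for $(\bm{x}, y) \sim p(\bm{x}, y)$ almost surely. By Theorem \ref{key}, for every $(\bm{x}, S) \sim \tilde{p}(\bm{x}, S)$ the true target $y$ lies in $S$, so $\underline{y} \leq f^\star(\bm{x}) = y \leq \overline{y}$ and hence both indicators in $\ell_{\mathrm{LM}}$ vanish. Taking expectation over $\tilde{p}(\bm{x}, S)$ yields $R_{\mathrm{LM}}(f^\star) = 0$, and by definition of the minimizer $R_{\mathrm{LM}}(f^\star_{\mathrm{LM}}) = 0$ as well.

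For step (ii), suppose for contradiction that $f^\star_{\mathrm{LM}}(\bm{x}) \neq y$ on a set of positive $p(\bm{x}, y)$-measure; without loss of generality take a subset on which $\delta(\bm{x}) := f^\star_{\mathrm{LM}}(\bm{x}) - y > 0$ (the case $\delta < 0$ is symmetric). Conditional on such $(\bm{x}, y)$, the generation process of Eq.~(\ref{equ3}) places uniform density $2/q^2$ over admissible pairs $(\underline{y}, \overline{y})$ with $\underline{y} \leq y \leq \overline{y}$ and $\overline{y} - \underline{y} \leq q$. I would then identify the subregion where $y \in S$ but $f^\star_{\mathrm{LM}}(\bm{x}) = y + \delta \notin S$, namely $\overline{y} \in [y, y+\delta)$ and $\underline{y} \in [\overline{y} - q, y]$; a direct area computation gives measure $q\delta - \delta^2/2 > 0$ for $\delta \leq q$ (and the region is even larger when $\delta > q$). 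On this subregion the second indicator in $\ell_{\mathrm{LM}}$ equals $1$, contributing strictly positive mass to $R_{\mathrm{LM}}(f^\star_{\mathrm{LM}})$ after integrating against $p(\bm{x}, y)$, contradicting $R_{\mathrm{LM}}(f^\star_{\mathrm{LM}}) = 0$.

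The main obstacle is the measure-theoretic computation in step (ii): one must verify that for every nonzero offset $\delta$, the uniform law over admissible intervals assigns positive probability to intervals that contain $y$ but exclude $y + \delta$. This requires carefully parameterizing the admissible set of $(\underline{y}, \overline{y})$ and tracking boundary cases (e.g., when $y$ lies near the extremes $y_{\mathrm{min}}, y_{\mathrm{max}}$, or when $|\delta| > q$). Step (i) is essentially immediate given Theorem \ref{key}, and once step (ii) is in hand the conclusion $f^\star_{\mathrm{LM}} = f^\star$ (as elements of $\mathcal{F}$ modulo $p(\bm{x})$-null sets) follows at once.
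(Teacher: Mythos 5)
Your two-step structure (show $R_{\mathrm{LM}}(f^\star)=0$, then show any zero of $R_{\mathrm{LM}}$ must agree with the true target almost surely) matches the paper's proof, and step (i) is in substance identical: the paper reduces $R_{\mathrm{LM}}(f^\star)$ to $R(f^\star)$ through an integral chain using $p(S\mid \bm{x},y)$, which is just the expectation form of your pointwise observation that $f^\star(\bm{x})=y\in S$ kills both indicators. Where you genuinely diverge is the uniqueness step. The paper argues abstractly: if $g$ predicts $y_g\neq y$ yet $R_{\mathrm{LM}}(g)=0$, then $y_g$ would have to lie in \emph{every} sampled interval, which is ruled out by the small ambiguity degree assumption $\lambda<1$. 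You instead compute, under the specific uniform generation model of Eq.~(\ref{equ3}), the exact conditional probability that an interval contains $y$ but excludes $y+\delta$; your area $q\delta-\delta^2/2$ is correct (it integrates $\int_0^{\delta}(q-t)\,\mathrm{d}t$ over the admissible $(\underline{y},\overline{y})$ region, and for $|\delta|>q$ the excluding region is all of $\mathcal{S}_q^y$). Your route buys a self-contained, quantitative argument that does not separately invoke the ambiguity-degree condition — indeed it proves that condition holds for this generative model — at the cost of being tied to the uniform interval law, whereas the paper's argument would survive under any $\tilde{p}$ with $\lambda<1$. Two small points to tighten when you write this up: since $\delta$ depends on $\bm{x}$, restrict to a positive-measure set where $|\delta(\bm{x})|\geq\epsilon$ before lower-bounding the integral; and note that the boundary worry near $y_{\mathrm{min}},y_{\mathrm{max}}$ does not arise because Eq.~(\ref{equ3}) places no truncation on $\underline{y},\overline{y}$ beyond $y\in S$ and $|S|\leq q$.
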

Theorem \ref{op-lm} demonstrates that the optimal regression model learned from fully labeled data can be identified by our limiting method given only data with interval targets (i.e., our limiting method is model consistent). However, we cannot directly train a regression model by using our limiting method in Eq.~(\ref{lmfun}), since the loss function in Eq.~(\ref{lmfun}) is non-convex and discontinuous. To address this problem, we propose the following surrogate loss function of our limiting method:
\begin{align}
\label{surrogate_loss}
\psi_{\mathrm{LM}}(f(\bm{x}), S) = &\max(0,\underline{y}-f(\bm{x}))+\max(0,f(\bm{x})-\overline{y}).
\end{align}
As can be seen from Eq.~(\ref{surrogate_loss}), this surrogate loss is convex and is an upper bound of the original loss in Eq.~(\ref{lmfun}). With the surrogate loss in Eq.~(\ref{surrogate_loss}), the expected regression risk of our proposed limiting method can be represented as:
\begin{align}
\nonumber
R_{\mathrm{LM}}^{\psi}(f)&=\mathbb{E}_{\tilde{p}(\bm{x}, S)}[\psi_{\mathrm{LM}}(f(\bm{x}),S)].
\end{align}
Then, we demonstrate that our limiting method with the surrogate loss is still consistent, by the following theorem.
\begin{theorem}
\label{lm-su}
Suppose the hypothesis space $\mathcal{F}$ is strong enough (i.e., $f^\star = \mathrm{argmin}_{f\in\mathcal{F}}{R}(f)$ leads to $R(f^{\star})=0$).
The model $f^{\psi\star}_{\mathrm{LM}}=\mathrm{argmin}_{f\in\mathcal{F}}R_{\mathrm{LM}}^{\psi}(f)$ learned by the surrogate method is equivalent to the optimal model $f^\star = \mathrm{argmin}_{f\in\mathcal{F}}{R}(f)$.
\end{theorem}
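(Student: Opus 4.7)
The plan is to reduce Theorem \ref{lm-su} to Theorem \ref{op-lm} through a single clean observation: the surrogate loss $\psi_{\mathrm{LM}}$ in Eq.~(\ref{surrogate_loss}) and the original loss $\ell_{\mathrm{LM}}$ in Eq.~(\ref{lmfun}) share the same zero set, because both vanish exactly when $f(\bm{x}) \in [\underline{y}, \overline{y}]$. Combined with the fact that both losses are nonnegative, this lets me pass between the two risks whenever the minimum value is $0$.

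First I would verify that $R_{\mathrm{LM}}^{\psi}(f^\star) = 0$. By the assumption $R(f^\star) = 0$ and the fact that a standard regression loss $\ell$ vanishes only when prediction equals target, one has $f^\star(\bm{x}) = y$ almost surely under $p(\bm{x}, y)$. By Theorem \ref{key}, $y \in S$ almost surely under $\tilde{p}(\bm{x}, S)$, so $f^\star(\bm{x}) \in [\underline{y}, \overline{y}]$ almost surely, which makes both hinge pieces $\max(0, \underline{y} - f^\star(\bm{x}))$ and $\max(0, f^\star(\bm{x}) - \overline{y})$ identically zero. Integrating gives $R_{\mathrm{LM}}^\psi(f^\star) = 0$, and since $\psi_{\mathrm{LM}} \geq 0$ this is the global minimum value of $R_{\mathrm{LM}}^\psi$ over $\mathcal{F}$.

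Next, I take any minimizer $f^{\psi\star}_{\mathrm{LM}}$ of $R_{\mathrm{LM}}^\psi$. The previous step forces $R_{\mathrm{LM}}^\psi(f^{\psi\star}_{\mathrm{LM}}) = 0$, and nonnegativity of $\psi_{\mathrm{LM}}$ yields $\psi_{\mathrm{LM}}(f^{\psi\star}_{\mathrm{LM}}(\bm{x}), S) = 0$ almost surely, i.e., $f^{\psi\star}_{\mathrm{LM}}(\bm{x}) \in S$ almost surely under $\tilde{p}$. But this is exactly the condition that makes $\ell_{\mathrm{LM}}(f^{\psi\star}_{\mathrm{LM}}(\bm{x}), S) = 0$ almost surely, so $R_{\mathrm{LM}}(f^{\psi\star}_{\mathrm{LM}}) = 0$ and $f^{\psi\star}_{\mathrm{LM}}$ is also a minimizer of $R_{\mathrm{LM}}$. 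Theorem \ref{op-lm} identifies such minimizers with $f^\star$, giving $f^{\psi\star}_{\mathrm{LM}} = f^\star$.

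The main obstacle is the implicit identifiability step hidden inside the invocation of Theorem \ref{op-lm}: the argument requires that \emph{every} $f \in \mathcal{F}$ satisfying $f(\bm{x}) \in S$ almost surely under $\tilde{p}$ coincides with $f^\star$, not merely some designated minimizer. This relies on the small ambiguity degree assumption $\lambda < 1$ together with the uniform interval-sampling scheme of Section \ref{data_gen_sec}, which together place positive mass on arbitrarily narrow intervals $S \ni y$ and thereby leave $y$ as the only value compatible with the constraint $f(\bm{x}) \in S$ for almost all such $S$. Once that identifiability is in hand, the reduction described above delivers the theorem directly.
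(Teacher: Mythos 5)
Your proposal is correct and follows essentially the same route as the paper: both arguments rest on the observation that $\psi_{\mathrm{LM}}$ and $\ell_{\mathrm{LM}}$ share the zero set $\{f(\bm{x})\in[\underline{y},\overline{y}]\}$, that the optimal model attains zero surrogate risk, and that the small ambiguity degree condition makes the zero-risk minimizer unique. The only cosmetic difference is that the paper re-runs the uniqueness-by-contradiction argument explicitly for $R^{\psi}_{\mathrm{LM}}$, whereas you fold that step into the invocation of Theorem \ref{op-lm}; the substance is identical.
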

Theorem \ref{lm-su} shows that the model learned by our limiting method is also equivalent to the optimal model $f^\star$ (learned from fully labeled data). This indicates that using the surrogate loss in Eq.~(\ref{surrogate_loss}), our limiting method is still consistent. Therefore, we can learn an effective regression model from the given dataset $\{\bm{x}_i,S_i\}_{i=1}^n$ by directly minimizing the following empirical risk:
\begin{gather}
\widehat{R}_{\mathrm{LM}}^{\psi}(f) = \sum\nolimits_{i=1}^n\psi_{\mathrm{LM}}(f(\bm{x}), S).
\end{gather}
Here, we further relate our limiting method $\psi_{\mathrm{LM}}$ to the average method $\ell_{\mathrm{avgl}}$ discussed in Section \ref{select_sec}, by the following corollary.
\begin{corollary}
\label{mae_surr}
The same minimizer (model) can be derived from $\psi_{\mathrm{LM}}$ and $\ell_{\mathrm{avgl}}$ if the mean absolute error is used as the regression loss $\ell$ in $\ell_{\mathrm{avgl}}$.
\end{corollary}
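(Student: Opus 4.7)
The plan is to show that the two objective functions, evaluated pointwise, differ by a quantity that depends only on the interval width $|S| = \overline{y} - \underline{y}$ and not on $f(\bm{x})$. Once this is established, averaging over $\tilde{p}(\bm{x}, S)$ produces risks that differ by an additive constant (the expected interval width), so they share the same $\argmin$ over $\mathcal{F}$.

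Concretely, I would fix an interval example $(\bm{x}, S)$ with $S = [\underline{y}, \overline{y}]$, set $z = f(\bm{x})$, and split into three cases based on where $z$ lies relative to $S$. When $z \in [\underline{y}, \overline{y}]$, $\psi_{\mathrm{LM}}(z, S) = 0$ while $\ell_{\mathrm{avgl}}(z, S) = \tfrac{1}{2}(z - \underline{y}) + \tfrac{1}{2}(\overline{y} - z) = \tfrac{1}{2}|S|$ under MAE. When $z < \underline{y}$, both $|z - \underline{y}|$ and $|z - \overline{y}|$ unfold with the same sign, and a direct calculation gives $\ell_{\mathrm{avgl}}(z, S) = (\underline{y} - z) + \tfrac{1}{2}|S| = \psi_{\mathrm{LM}}(z, S) + \tfrac{1}{2}|S|$. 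The case $z > \overline{y}$ is symmetric. Combining these three cases yields the identity
\begin{equation}
\ell_{\mathrm{avgl}}(f(\bm{x}), S) \;=\; \psi_{\mathrm{LM}}(f(\bm{x}), S) \;+\; \tfrac{1}{2}|S|
\end{equation}
for every $\bm{x}$ and every $S$.

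Taking expectations under $\tilde{p}(\bm{x}, S)$ then gives
\begin{equation}
\mathbb{E}_{\tilde{p}(\bm{x}, S)}[\ell_{\mathrm{avgl}}(f(\bm{x}), S)] \;=\; R^{\psi}_{\mathrm{LM}}(f) \;+\; \tfrac{1}{2}\,\mathbb{E}_{\tilde{p}(\bm{x}, S)}[|S|],
\end{equation}
and the additive term is a finite constant independent of $f$. Therefore any minimizer of one risk over $\mathcal{F}$ is also a minimizer of the other, which is the claim. The same reasoning applies verbatim at the empirical level, since summing the pointwise identity over the training set produces an $n$-dependent but $f$-independent offset.

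I do not anticipate a real obstacle here: the argument is essentially a pointwise algebraic identity obtained by case analysis on the sign pattern of $\underline{y} - z$ and $z - \overline{y}$. The only detail worth being careful about is confirming that the identity holds at the boundary points $z = \underline{y}$ and $z = \overline{y}$ so that the three cases cover $\mathbb{R}$ without gap or inconsistency; this is immediate because both sides are continuous in $z$.
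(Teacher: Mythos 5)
Your proposal is correct and follows essentially the same three-case analysis as the paper's proof in Appendix C, showing that $\ell_{\mathrm{avgl}}$ and $\psi_{\mathrm{LM}}$ differ pointwise only by a term independent of $f(\bm{x})$. In fact your version is slightly tighter: you correctly compute the in-interval value of $\ell_{\mathrm{avgl}}$ as $\tfrac{1}{2}|S|$ (the paper's proof writes $0$ there, a slip), and you conclude via the explicit additive offset $\tfrac{1}{2}\mathbb{E}_{\tilde{p}(\bm{x},S)}[|S|]$ between the two risks rather than the paper's informal appeal to equality of gradients.
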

Corollary \ref{mae_surr} implies that with the mean absolute error, the average loss $\ell_{\mathrm{avgl}}$ is also model-consistent, because our limiting method is model-consistent. However, using other losses (e.g., the mean squared error) cannot make $\ell_{\mathrm{avgl}}$ theoretically grounded, and thus the empirical performance is guaranteed. We conduct experiments to demonstrate this argument, and experimental results (given in Appendix \ref{comavgl}) show that the mean absolute error clearly outperforms the mean squared error, when used in $\ell_{\mathrm{avgl}}$.

\noindent\textbf{Consistency analysis.}
Here, we provide a consistency analysis for the above limiting method, which shows that the model $\widehat{f}_{\mathrm{LM}}^{\psi}=\argmin_{f\in\mathcal{F}}\widehat{R}_{\mathrm{LM}}^{\psi}(f)$ (empirically learned from RIT data by using our limiting method) is infinite-sample consistent to the optimal model $f^\star$.
\begin{theorem}
\label{error_bound}
Assume that for all $(\bm{x},S)$ with $S=[\underline{y}, \overline{y}]$ drawn from $\tilde{p}(\bm{x},S)$ and all $f\in\mathcal{F}$, there exist constants $M$ and $M^\prime$ such that
$\max(\underline{y} - f(\bm{x}), 0) \leq M$ and $\max(f(\bm{x}) - \overline{y}, 0) \leq M^\prime$. Suppose that the pseudo-dimensions of $\{(\bm{x}, \underline{y}) \mapsto \max(\underline{y} - f(\bm{x}), 0)\mid f\in\mathcal{F}\}$ and $\{(\bm{x}, \overline{y}) \mapsto \max(f(\bm{x}) - \overline{y}, 0)\mid f\in\mathcal{F}\}$ are finite, which are denoted by $d$ and $d^\prime$.  Then, with probability at least $1-\delta$,
\begin{align}
\nonumber
&R^{\psi}_{\mathrm{LM}}(\widehat{f}_{\mathrm{LM}}^{\psi}) - R^{\psi}_{\mathrm{LM}}(f^\star) \leq 2M\sqrt{\frac{2d\log\frac{en}{d}}{n}}\\
\nonumber
&\qquad\qquad\quad + 2M^\prime\sqrt{\frac{2d^\prime\log\frac{en}{d^\prime}}{n}} + 2(M+M^\prime)\sqrt{\frac{\log\frac{4}{\delta}}{2n}}.
\end{align}
\end{theorem}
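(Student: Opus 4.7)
The plan is to prove this by the standard excess-risk decomposition plus uniform convergence bounds driven by pseudo-dimension. First I would write
\begin{align}
\nonumber
R_{\mathrm{LM}}^{\psi}(\widehat{f}_{\mathrm{LM}}^{\psi}) - R_{\mathrm{LM}}^{\psi}(f^\star)
&\leq \bigl(R_{\mathrm{LM}}^{\psi}(\widehat{f}_{\mathrm{LM}}^{\psi}) - \widehat{R}_{\mathrm{LM}}^{\psi}(\widehat{f}_{\mathrm{LM}}^{\psi})\bigr)\\
\nonumber
&\quad + \bigl(\widehat{R}_{\mathrm{LM}}^{\psi}(f^\star) - R_{\mathrm{LM}}^{\psi}(f^\star)\bigr),
\end{align}
using that $\widehat{f}_{\mathrm{LM}}^{\psi}$ minimizes the empirical risk, and then upper bound each term by $\sup_{f\in\mathcal{F}}|R_{\mathrm{LM}}^{\psi}(f) - \widehat{R}_{\mathrm{LM}}^{\psi}(f)|$.

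Next I would exploit the additive structure of the surrogate $\psi_{\mathrm{LM}}(f(\bm{x}), S) = \max(\underline{y} - f(\bm{x}), 0) + \max(f(\bm{x}) - \overline{y}, 0)$ to split the supremum into the two function classes $\mathcal{G} = \{(\bm{x},\underline{y})\mapsto \max(\underline{y} - f(\bm{x}), 0) : f\in\mathcal{F}\}$ and $\mathcal{G}' = \{(\bm{x},\overline{y})\mapsto \max(f(\bm{x}) - \overline{y}, 0) : f\in\mathcal{F}\}$, whose ranges are bounded by $M$ and $M'$ respectively. For each class I would invoke standard symmetrization to obtain, with probability at least $1 - \delta/2$,
\begin{align}
\nonumber
\sup_{g\in\mathcal{G}} \bigl|\mathbb{E}[g] - \tfrac{1}{n}\sum_i g(z_i)\bigr| \leq 2\mathfrak{R}_n(\mathcal{G}) + M\sqrt{\tfrac{\log(4/\delta)}{2n}},
\end{align}
where the deviation term comes from McDiarmid's inequality applied to the uniform deviation (bounded differences with constant $M/n$). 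An analogous statement applies to $\mathcal{G}'$, and a union bound combines them at total confidence $1-\delta$, which explains the $\log(4/\delta)$ factor.

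The remaining step is to control the Rademacher complexities $\mathfrak{R}_n(\mathcal{G})$ and $\mathfrak{R}_n(\mathcal{G}')$ via the pseudo-dimensions $d$ and $d'$. Here I would apply the classical bound (derived from Sauer--Shelah and Dudley's entropy integral for function classes of finite pseudo-dimension, see Anthony \& Bartlett) which yields
\begin{align}
\nonumber
\mathfrak{R}_n(\mathcal{G}) \leq M\sqrt{\tfrac{2d\log(en/d)}{n}}, \quad \mathfrak{R}_n(\mathcal{G}') \leq M'\sqrt{\tfrac{2d'\log(en/d')}{n}}.
\end{align}
Plugging these back gives the claimed bound after combining via the triangle inequality $\sqrt{\log(4/\delta)/(2n)} \cdot (M+M')$ for the concentration piece.

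The main obstacle will be the bookkeeping: making sure the split of $\psi_{\mathrm{LM}}$ into two independent empirical processes is clean, handling the union bound so the constants in the deviation term match exactly, and correctly citing the pseudo-dimension-to-Rademacher conversion (since this is the step that carries the bulk of the technical weight). Everything else is essentially routine uniform convergence machinery.
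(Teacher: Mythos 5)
Your proposal follows essentially the same route as the paper: the standard excess-risk decomposition bounding the excess risk by twice the uniform deviation, splitting $\psi_{\mathrm{LM}}$ into the two one-sided hinge classes with ranges $M$ and $M'$, and controlling each class by a pseudo-dimension-based uniform convergence bound with a union bound producing the $\log(4/\delta)$ term (the paper simply cites Theorem 10.6 of Mohri et al.\ where you unpack it via symmetrization and Rademacher complexity). The only caveat is the bookkeeping you yourself flagged: passing through $2\mathfrak{R}_n(\mathcal{G})$ together with $\mathfrak{R}_n(\mathcal{G})\leq M\sqrt{2d\log(en/d)/n}$ would leave an extra factor of $2$ on the complexity terms relative to the stated bound, so to match the constants exactly you should invoke the pseudo-dimension uniform-convergence bound in the form the paper cites rather than re-deriving it through symmetrization.
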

Theorem \ref{error_bound} shows that the risk of $\widehat{f}^{\psi}_{\mathrm{LM}}$ converges to the risk of $f^\star$, as the number of training data goes to infinity.


\begin{table*}[!t]
\centering
\caption{Test performance (mean and std) of each method on AgeDB. The used evaluation metrics include MSE and MAE. We repeat the sampling-and-training process 3 times. The best performance is highlighted in bold.}
\label{agedb}
\resizebox{0.99\textwidth}{!}{
\setlength{\tabcolsep}{3.7mm}{
\begin{tabular}{cc|ccccc|ccccc}
\toprule
\multicolumn{2}{c|}{Metric}                    & \multicolumn{5}{c|}{MSE}                                                                & \multicolumn{5}{c}{MAE}                                                                 \\ \hline
\multicolumn{2}{c|}{Approach}                   & $q=30$            & $q=40$            & $q=50$            & $q=60$            & $q=70$            & $q=30$            & $q=40$            & $q=50$            & $q=60$           & $q=70$            \\ \hline
\multirow{6}{*}{Leftmost} & \multirow{2}{*}{MAE} & 158.38 & 210.34 & 205.75 & 283.83 & 355.59 & 9.88 & 11.59 & 11.27 & 13.62 & 14.93 \\
 &  & (6.71) & (19.85) & (22.04) & (15.36) & (105.16) & (0.22) & (0.61) & (0.80) & (0.62) & (2.38) \\
 & \multirow{2}{*}{MSE} & 134.99 & 196.83 & 221.54 & 295.49 & 347.27 & 9.25 & 11.19 & 11.91 & 13.90 & 14.98 \\
 &  & (4.03) & (19.28) & (18.04) & (32.09) & (80.77) & (0.06) & (0.71) & (0.48) & (1.24) & (2.10) \\
 & \multirow{2}{*}{Huber} & 156.34 & 175.95 & 208.03 & 317.17 & 360.11 & 9.85 & 10.54 & 11.42 & 14.39 & 15.41 \\
 &  & (17.95) & (7.61) & (18.36) & (22.57) & (53.12) & (0.57) & (0.19) & (0.61) & (0.37) & (1.30) \\ \hline
\multirow{6}{*}{Rightmost} & \multirow{2}{*}{MAE} & 154.87 & 196.95 & 233.18 & 255.24 & 428.05 & 9.91 & 11.31 & 12.48 & 13.12 & 17.51 \\
 &  & (13.12) & (20.96) & (26.58) & (15.12) & (41.79) & (0.45) & (0.58) & (0.79) & (0.40) & (0.93) \\
 & \multirow{2}{*}{MSE} & 146.85 & 215.06 & 260.37 & 304.71 & 452.61 & 9.57 & 11.96 & 13.27 & 14.47 & 18.15 \\
 &  & (24.39) & (23.92) & (15.32) & (49.88) & (45.87) & (0.86) & (0.66) & (0.49) & (1.29) & (1.15) \\
 & \multirow{2}{*}{Huber} & 149.14 & 179.72 & 246.29 & 279.70 & 436.29 & 9.72 & 10.84 & 12.88 & 13.90 & 17.50 \\
 &  & (7.74) & (10.57) & (16.02) & (17.45) & (78.90) & (0.31) & (0.37) & (0.48) & (0.44) & (1.86) \\ \hline
\multirow{6}{*}{Middlemost} & \multirow{2}{*}{MAE} & 116.14 & 133.44 & 129.55 & 138.95 & 150.88 & 8.38 & 8.93 & 8.97 & 9.21 & 9.57 \\
 &  & (2.57) & (5.05) & (1.37) & (5.22) & (3.66) & (0.13) & (0.15) & (0.13) & (0.11) & (0.12) \\
 & \multirow{2}{*}{MSE} & 119.90 & 133.27 & 128.84 & 138.36 & 149.82 & 8.45 & 8.94 & 8.89 & 9.32 & 9.53 \\
 &  & (6.23) & (5.18) & (3.01) & (6.10) & (5.28) & (0.18) & (0.22) & (0.21) & (0.27) & (0.07) \\
 & \multirow{2}{*}{Huber} & 121.78 & 131.43 & 131.38 & 140.40 & 149.25 & 8.62 & 8.92 & 8.96 & 9.28 & 9.65 \\
 &  & (4.75) & (3.84) & (2.20) & (6.18) & (0.70) & (0.14) & (0.15) & (0.08) & (0.19) & (0.09) \\ \hline
\multicolumn{2}{c|}{\multirow{2}{*}{CRM}} & 221.66 & 303.52 & 398.50 & 523.53 & 653.81 & 12.18 & 14.57 & 17.17 & 20.11 & 22.89 \\
\multicolumn{2}{c|}{} & (1.45) & (11.12) & (14.80) & (4.63) & (17.30) & (0.07) & (0.30) & (0.40) & (0.08) & (0.09) \\
\multicolumn{2}{c|}{\multirow{2}{*}{RANN}} & 125.04 & 126.02 & 129.86 & 139.83 & 148.25 & 8.69 & 8.73 & 8.89 & 9.32 & 9.69 \\
\multicolumn{2}{c|}{} & (1.09) & (1.74) & (1.00) & (3.41) & (2.33) & (0.04) & (0.07) & (0.05) & (0.22) & (0.11) \\
\multicolumn{2}{c|}{\multirow{2}{*}{SINN}} & 218.16 & 302.06 & 404.80 & 524.74 & 649.27 & 12.11 & 14.54 & 17.41 & 19.97 & 22.70 \\
\multicolumn{2}{c|}{} & (1.53) & (6.53) & (2.70) & (7.17) & (2.94) & (0.08) & (0.24) & (0.04) & (0.13) & (0.14) \\
\multicolumn{2}{c|}{\multirow{2}{*}{IN}} & 118.85 & 133.86 & 138.58 & 147.95 & 152.34 & 8.46 & 9.00 & 9.15 & 9.53 & 9.65 \\
\multicolumn{2}{c|}{} & (5.20) & (6.46) & (4.06) & (2.82) & (9.30) & (0.17) & (0.13) & (0.09) & (0.07) & (0.21) \\ \hline
\multicolumn{2}{c|}{\multirow{2}{*}{LM}} & \textbf{115.76} & \textbf{121.24} & \textbf{123.51} & \textbf{128.04} & \textbf{129.47} & \textbf{8.36} & \textbf{8.67} & \textbf{8.75} & \textbf{8.82} & \textbf{8.92} \\
\multicolumn{2}{c|}{} & \textbf{(1.64)} & \textbf{(4.73)} & \textbf{(2.78)} & \textbf{(3.52)} & \textbf{(7.15)} & \textbf{(0.07)} & \textbf{(0.03)} & \textbf{(0.14)} & \textbf{(0.16)} & \textbf{(0.14)}
\\
\bottomrule
\end{tabular}
}
}
\end{table*}

\section{Experiments}\label{experiment_sec}
In this section, we conduct extensive experiments to validate the effectiveness of our proposed limiting method.
\subsection{Experimental Setup}
\noindent\textbf{Datasets.}
We conduct experiments on nine datasets, including two computer vision datasets (AgeDB \cite{8014984} and IMDB-WIKI \cite{rothe2018deep}), one natural language processing dataset (STS-B \cite{cer2017semeval}), and six datasets from the UCI Machine Learning Repository \cite{Dua2019UCI} (Abalone, Airfoil, Auto-mpg, Housing, Concrete, and Power-plant). Following the data distribution proposed in Section \ref{data_gen_sec}, We generated the following RIT datasets, including AgeDB-Interval at $q$ = 10, 20, 30, 40, and 50, IMDB-WIKI-Interval at $q$ = 20, 30, and 40, and STS-B-Interval at $q$ = 3.0, 3.5, 4.0, 4.5 and 5.0. For each UCI dataset, we selected two large values of $q$ to generate RIT data based on the span of the label space. The specific descriptions of used datasets with the corresponding base models and the specific hyperparameter settings are reported in Appendix~\ref{data}.

\noindent\textbf{Base models.}
For the UCI dataset, we used two models, a linear model and a multilayer perceptron (MLP), where the MLP model is a five-layer ($d$-20-30-10-1) neural network with a ReLU activation function. For the linear model and the MLP model, we use the Adam optimization method \cite{kingma2014adam} with the batch size set to 512 and the number of training epochs set to 1,000, and the learning rate for all methods is selected from $\{10^{-2},10^{-3}\}$. For both the IMDB-WIKI and AgeDB datasets, we use ResNet-50 \cite{he2016deep} as our backbone network. We use the Adam optimizer to train all methods for 100 epochs with an initial learning rate of $10^{-3}$ and fix the batch size to 256. For the STS-B dataset, we follow \citet{wang2019glue} to use the same 300D GloVe word embeddings and a two-layer 1500D (per direction) BiLSTM with max pooling to encode the paired sentences into independent vectors $u$ and $v$, and then pass $[u;v;|u-v|;uv]$ to a regressor. We also use the Adam optimizer to train all methods for 100 epochs with an initial learning rate of $10^{-4}$ and fix the batch size to 256.

\noindent\textbf{Compared methods.}
We use the leftmost, rightmost, and middlemost selection strategies analyzed in Section \ref{select_sec} as our baseline methods. Since the three methods do not rely on any loss function, we use the mean absolute error (MAE), the mean squared error (MSE), and the Huber loss (commonly used in regression tasks) as loss functions to form our baseline methods. For the Huber loss, the threshold value is selected from $\{1, 5\}$. In particular, we compare with multiple methods for interval-valued data prediction, including CRM \cite{neto2008centre}, SINN \cite{yang2012smoothing}, RANN \cite{yang2019interval}, IN \cite{sadeghi2019efficient}. Since the outputs of these methods are intervals, we use the midpoint of the interval as the predicted value.
\begin{table*}[!t]
\centering
\caption{Test performance (mean and std) of each method on IMDB-WIKI. The used evaluation metrics include MSE and MAE. We repeat the sampling-and-training process 3 times. The best performance is highlighted in bold.}
\label{imdb}
\resizebox{0.99\textwidth}{!}{
\setlength{\tabcolsep}{3.5mm}{
\begin{tabular}{cc|ccccc|ccccc}
\toprule
\multicolumn{2}{c|}{Metric}                    & \multicolumn{5}{c|}{MSE}                                                                & \multicolumn{5}{c}{MAE}                                                                 \\ \hline
\multicolumn{2}{c|}{Approach}                   & $q=40$            & $q=50$            & $q=60$           & $q=70$            & $q=80$            &  $q=40$            & $q=50$            & $q=60$            & $q=70$            & $q=80$            \\ \hline
\multirow{6}{*}{Leftmost}  & \multirow{2}{*}{MAE}   & 270.15          & 336.73          & 402.05          & 494.84          & 581.54          & 13.06           & 14.76           & 16.28           & 18.30           & 20.39           \\
                       &                        & (15.14)         & (30.92)         & (39.13)         & (28.64)         & (19.81)         & (0.53)          & (0.77)          & (1.10)          & (0.82)          & (0.46)          \\
                       & \multirow{2}{*}{MSE}   & 255.40          & 305.92          & 358.15          & 383.77          & 421.16          & 12.67           & 13.95           & 15.21           & 15.86           & 16.66           \\
                       &                        & (13.45)         & (7.33)          & (25.61)         & (16.82)         & (94.72)         & (0.40)          & (0.60)          & (0.57)          & (0.36)          & (2.36)          \\
                       & \multirow{2}{*}{Huber} & 291.25          & 320.37          & 385.99          & 512.73          & 596.39          & 13.39           & 14.24           & 15.82           & 18.57           & 20.71           \\
                       &                        & (14.98)         & (10.01)         & (41.84)         & (107.26)        & (46.08)         & (0.26)          & (0.42)          & (1.14)          & (2.54)          & (0.79)          \\ \hline
\multirow{6}{*}{Rightmost} & \multirow{2}{*}{MAE}   & 197.26          & 265.60          & 328.77          & 608.51          & 679.26          & 11.15           & 13.38           & 14.74           & 20.80           & 22.82           \\
                       &                        & (17.24)         & (27.79)         & (63.99)         & (76.85)         & (80.82)         & (0.62)          & (0.75)          & (1.55)          & (1.50)          & (2.21)          \\
                       & \multirow{2}{*}{MSE}   & 214.42          & 280.99          & 357.06          & 497.47          & 643.99          & 11.71           & 13.50           & 15.29           & 18.82           & 21.38           \\
                       &                        & (10.40)         & (12.52)         & (64.59)         & (43.98)         & (85.80)         & (0.35)          & (0.35)          & (1.54)          & (1.05)          & (1.49)          \\
                       & \multirow{2}{*}{Huber} & 198.97          & 243.81          & 379.34          & 536.87          & 491.39          & 11.23           & 12.82           & 16.19           & 19.63           & 18.24           \\
                       &                        & (6.47)          & (11.64)         & (45.10)         & (21.50)         & (103.54)        & (0.21)          & (0.31)          & (1.01)          & (0.67)          & (2.22)          \\ \hline
\multirow{6}{*}{Middlemost}   & \multirow{2}{*}{MAE}   & 140.29          & 148.93          & 152.20          & 154.79          & 157.17          & 8.96            & 9.24            & 9.47            & 9.55            & 9.76            \\
                       &                        & (6.68)          & (3.00)          & (9.59)          & (4.29)          & (5.96)          & (0.10)          & (0.08)          & (0.36)          & (0.21)          & (0.22)          \\
                       & \multirow{2}{*}{MSE}   & 135.57          & 142.68          & 144.34          & 153.10          & 153.80          & 8.89            & 9.10            & 9.23            & 9.61            & 9.79            \\
                       &                        & (2.77)          & (2.43)          & (2.60)          & (9.34)          & (3.57)          & (0.10)          & (0.06)          & (0.11)          & (0.30)          & (0.08)          \\
                       & \multirow{2}{*}{Huber} & 142.21          & 148.52          & 153.97          & 152.52          & 154.77          & 9.04            & 9.22            & 9.49            & 9.49            & 9.70            \\
                       &                        & (6.33)          & (3.93)          & (2.31)          & (3.48)          & (3.58)          & (0.19)          & (0.15)          & (0.05)          & (0.12)          & (0.15)          \\ \hline
\multicolumn{2}{c|}{\multirow{2}{*}{CRM}}       & 302.50          & 380.12          & 519.7           & 602.65          & 740.99          & 14.38           & 16.81           & 20.11           & 21.85           & 24.61           \\
\multicolumn{2}{c|}{}                           & (12.53)         & (15.30)         & (10.23)         & (7.20)          & (19.56)         & (0.42)          & (0.28)          & (0.20)          & (0.16)          & (0.78)          \\
\multicolumn{2}{c|}{\multirow{2}{*}{RANN}}      & 137.37          & 140.79          & 145.40          & 150.11          & 166.33          & 8.98            & 8.98            & 9.32            & 9.52            & 10.09           \\
\multicolumn{2}{c|}{}                           & (3.09)          & (2.28)          & (2.20)          & (2.48)          & (4.02)          & (0.10)          & (0.12)          & (0.08)          & (0.07)          & (0.08)          \\
\multicolumn{2}{c|}{\multirow{2}{*}{SINN}}      & 314.49          & 385.29          & 515.80          & 629.51          & 754.28          & 14.79           & 16.73           & 19.84           & 22.36           & 24.73           \\
\multicolumn{2}{c|}{}                           & (4.08)          & (8.26)          & (9.95)          & (11.10)         & (15.20)         & (0.14)          & (0.19)          & (0.30)          & (0.23)          & (0.65)          \\
\multicolumn{2}{c|}{\multirow{2}{*}{IN}}        & 147.09          & 148.71          & 152.66          & 155.73          & 156.04          & 9.25            & 9.28            & 9.51            & 9.59            & 9.79            \\
\multicolumn{2}{c|}{}                           & (0.59)          & (1.01)          & (2.54)          & (5.67)          & (2.89)          & (0.04)          & (0.09)          & (0.10)          & (0.17)          & (0.06)          \\ \hline
\multicolumn{2}{c|}{\multirow{2}{*}{LM}}        & \textbf{133.98} & \textbf{134.15} & \textbf{141.45} & \textbf{148.19} & \textbf{146.52} & \textbf{8.75}   & \textbf{8.83}   & \textbf{9.07}   & \textbf{9.40}   & \textbf{9.39}   \\
\multicolumn{2}{c|}{}                           & \textbf{(1.57)} & \textbf{(2.49)} & \textbf{(2.32)} & \textbf{(2.44)} & \textbf{(5.00)} & \textbf{(0.06)} & \textbf{(0.07)} & \textbf{(0.11)} & \textbf{(0.08)} & \textbf{(0.04)}
\\
\bottomrule
\end{tabular}
}
}
\end{table*}
\begin{figure*}[!t]
\centering
  \subfigure[Abalone]{
      \includegraphics[width=1.5in]{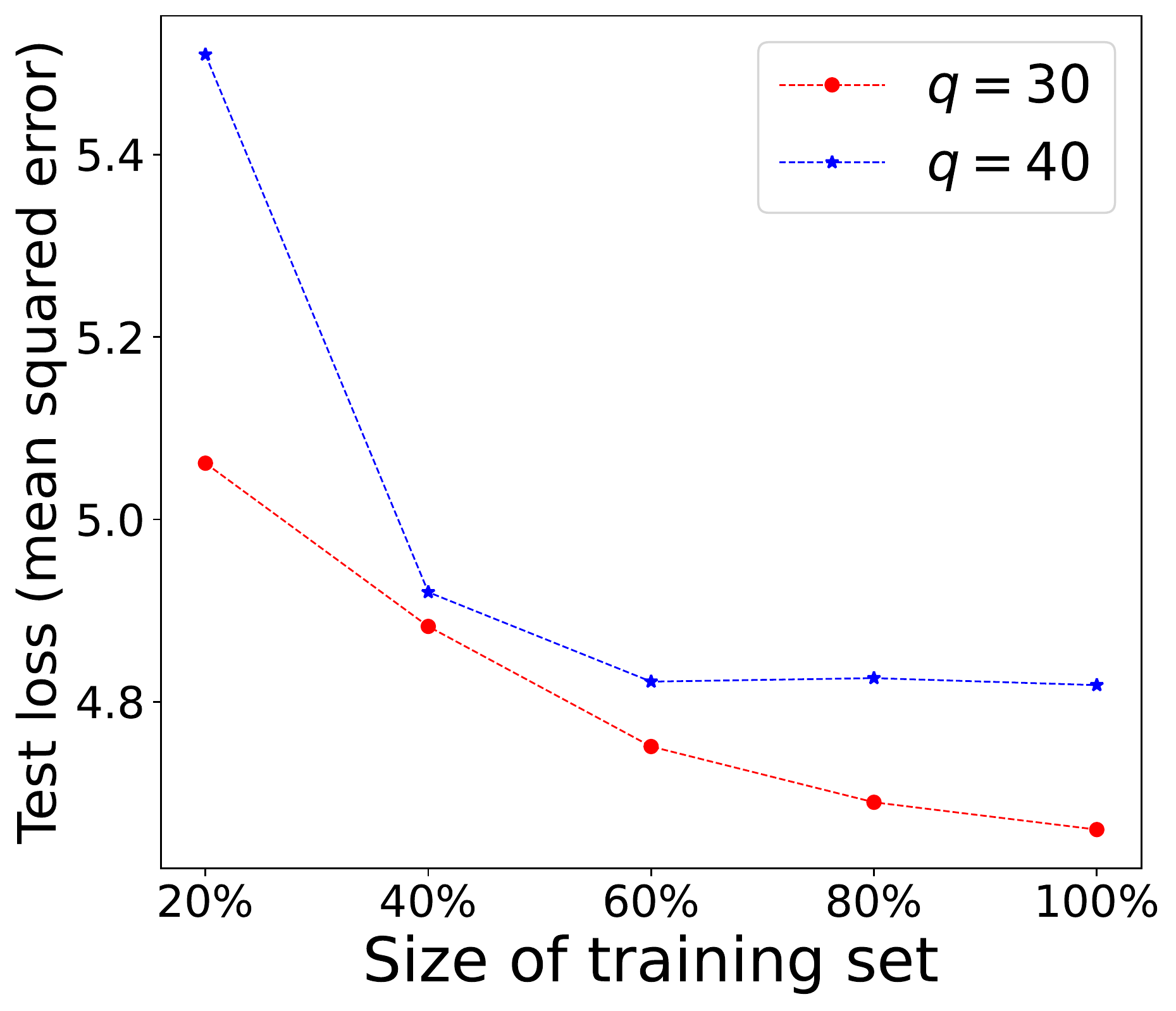}
      }
  \subfigure[Auto-mpg]{
      \includegraphics[width=1.5in]{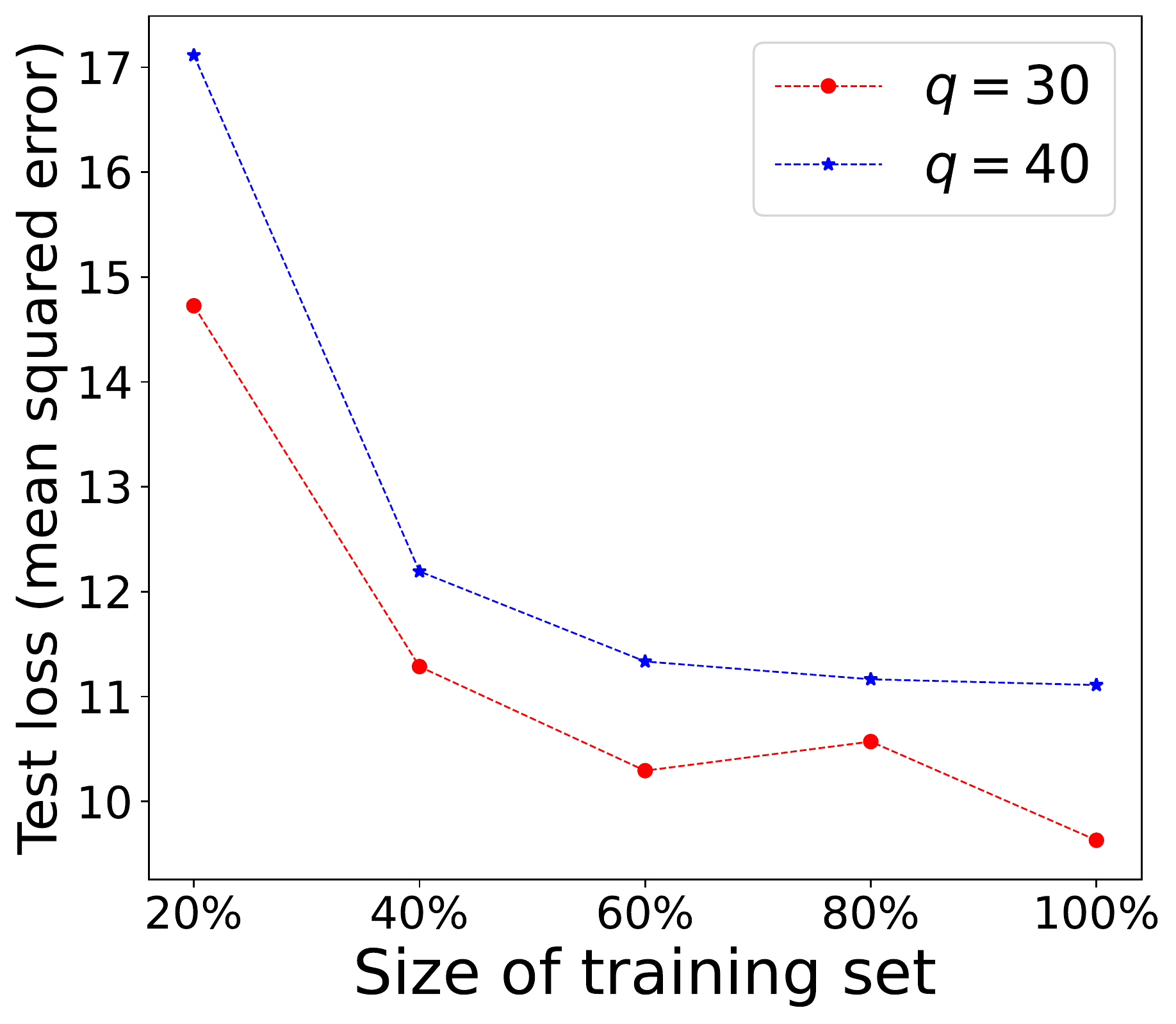}
      }
  \subfigure[Airfoil]{
      \includegraphics[width=1.5in]{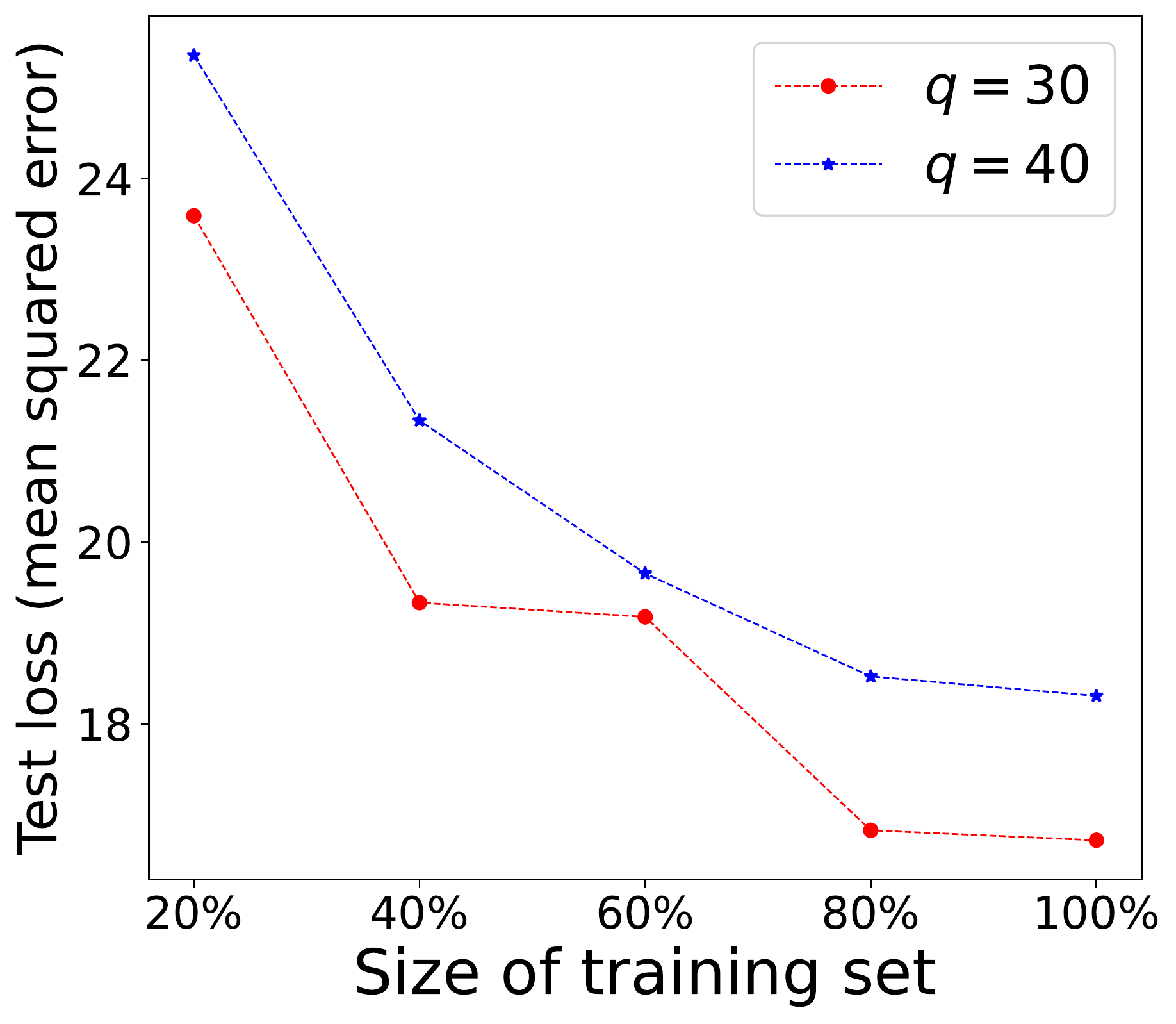}
      }
  \subfigure[Power-plant]{
      \includegraphics[width=1.5in]{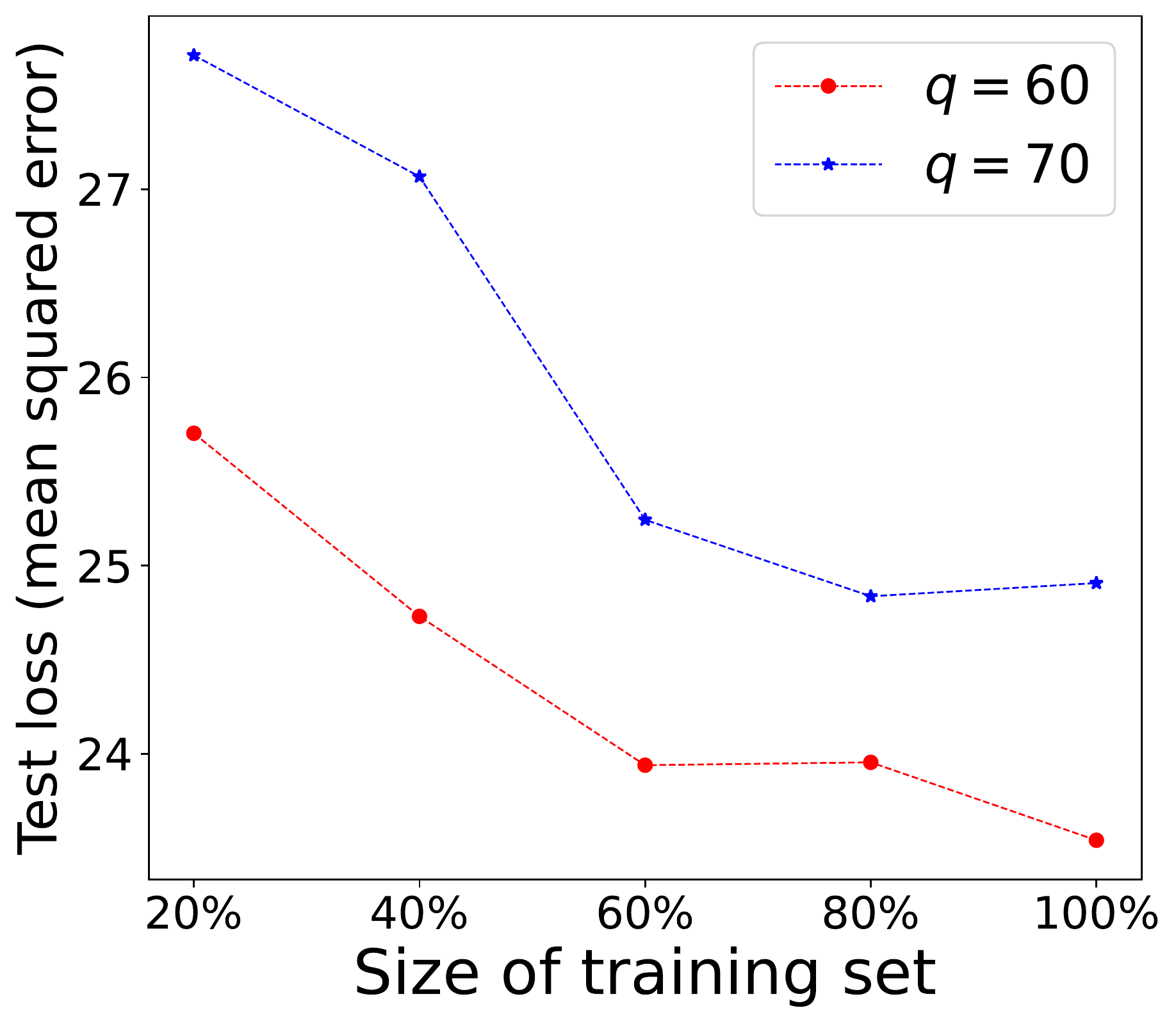}
      }
  \caption{The test performance (MSE) on the Abalone, Auto-mpg, Airfoil and Power-plant datasets of our proposed LM when the number of training data increases.}
  \label{fig2}
\vspace{-0.2cm}
\end{figure*}

\noindent\textbf{Evaluation metrics.}
For metrics, we use common evaluation metrics for regression, such as the MSE, MAE, and Pearson correlation. We also use another evaluation metric called Geometric Mean \cite{yang2021delving}.

\begin{table*}[!t]
\caption{Test performance (mean and std) of each method on the six UCI datasets trained with the MLP model. The used evaluation metrics include MSE and MAE. We repeat the sampling-and-training process 5 times. The best performance is highlighted in bold.}
\label{mlp}
\resizebox{0.99\textwidth}{!}{
\setlength{\tabcolsep}{1.9mm}{
\begin{tabular}{c|cc|ccc|ccc|ccc|cccc|c}
\toprule
\multirow{2}{*}{Dataset}    & \multirow{2}{*}{Metric} & \multirow{2}{*}{$q$}  & \multicolumn{3}{c|}{Leftmost}   & \multicolumn{3}{c|}{Rightmost}   & \multicolumn{3}{c|}{Middlemost}   & \multirow{2}{*}{CRM} & \multirow{2}{*}{RANN} & \multirow{2}{*}{SINN} & \multirow{2}{*}{IN} & \multirow{2}{*}{LM} \\
                             &                          &                     & MSE     & MAE     & Huber   & MSE      & MAE     & Huber   & MSE     & MAE    & Huber   &                      &                       &                       &                     &                     \\ \hline
\multirow{8}{*}{Abalone}     & \multirow{4}{*}{MSE}     & \multirow{2}{*}{30} & 65.33   & 55.73   & 59.82   & 7.99     & 8.08    & 8.03    & 6.38    & 5.45   & 6.01    & 6.51                 & 6.67                  & 6.44                  & 5.61                & \textbf{4.66}       \\
                             &                          &                     & (2.09)  & (1.79)  & (3.07)  & (0.70)   & (0.71)  & (0.71)  & (0.44)  & (0.43) & (0.44)  & (0.52)               & (0.48)                & (0.44)                & (0.40)              & \textbf{(0.49)}     \\
                             &                          & \multirow{2}{*}{40} & 90.12   & 85.71   & 89.27   & 8.13     & 8.23    & 8.21    & 7.77    & 7.81   & 7.79    & 7.85                 & 7.74                  & 7.85                  & 7.84                & \textbf{4.81}       \\
                             &                          &                     & (3.16)  & (6.28)  & (4.47)  & (0.67)   & (0.63)  & (0.64)  & (0.68)  & (0.76) & (0.76)  & (0.81)               & (0.64)                & (0.81)                & (0.75)              & \textbf{(0.42)}     \\ \cline{2-17} 
                             & \multirow{4}{*}{MAE}     & \multirow{2}{*}{30} & 7.60    & 6.96    & 7.21    & 2.01     & 2.11    & 2.10    & 1.94    & 1.77   & 1.91    & 1.94                 & 1.91                  & 1.94                  & 1.86                & \textbf{1.50}       \\
                             &                          &                     & (0.11)  & (0.13)  & (0.16)  & (0.08)   & (0.08)  & (0.08)  & (0.06)  & (0.08) & (0.07)  & (0.07)               & (0.07)                & (0.07)                & (0.05)              & \textbf{(0.05)}     \\
                             &                          & \multirow{2}{*}{40} & 9.03    & 8.80    & 9.03    & 2.02     & 2.12    & 2.11    & 1.96    & 1.95   & 1.96    & 1.99                 & 2.02                  & 1.99                  & 1.97                & \textbf{1.53}       \\
                             &                          &                     & (0.15)  & (0.32)  & (0.18)  & (0.08)   & (0.07)  & (0.07)  & (0.08)  & (0.06) & (0.06)  & (0.07)               & (0.08)                & (0.07)                & (0.06)              & \textbf{(0.08)}     \\ \hline
\multirow{8}{*}{Airfoil}     & \multirow{4}{*}{MSE}     & \multirow{2}{*}{30} & 122.58  & 89.68   & 93.90   & 105.20   & 79.28   & 85.16   & 19.24   & 18.71  & 18.24   & 17.40                & 17.30                 & 17.35                 & 19.20               & \textbf{16.72}      \\
                             &                          &                     & (89.68) & (5.26)  & (8.01)  & (9.19)   & (7.75)  & (14.44) & (1.70)  & (1.16) & (2.02)  & (3.07)               & (2.86)                & (3.13)                & (1.30)              & \textbf{(3.42)}     \\
                             &                          & \multirow{2}{*}{40} & 200.70  & 164.64  & 183.30  & 134.36   & 115.92  & 120.34  & 24.32   & 20.43  & 20.99   & 23.12                & 22.98                 & 23.08                 & 19.37               & \textbf{18.31}      \\
                             &                          &                     & (13.62) & (9.28)  & (23.42) & (13.66)  & (5.56)  & (7.57)  & (1.16)  & (1.87) & (1.89)  & (2.30)               & (1.92)                & (2.38)                & (2.51)              & \textbf{(2.63)}     \\ \cline{2-17} 
                             & \multirow{4}{*}{MAE}     & \multirow{2}{*}{30} & 9.81    & 8.50    & 8.71    & 8.00     & 7.78    & 7.98    & 3.41    & 3.30   & 3.28    & 3.26                 & 3.23                  & 3.26                  & 3.33                & \textbf{3.09}       \\
                             &                          &                     & (0.52)  & (0.39)  & (0.62)  & (0.26)   & (0.38)  & (0.51)  & (0.24)  & (0.15) & (0.25)  & (0.33)               & (0.32)                & (0.32)                & (0.16)              & \textbf{(0.35)}     \\
                             &                          & \multirow{2}{*}{40} & 11.93   & 11.96   & 12.18   & 8.93     & 9.24    & 9.29    & 3.97    & 3.54   & 3.61    & 3.85                 & 3.85                  & 3.85                  & 3.47                & \textbf{3.24}       \\
                             &                          &                     & (0.31)  & (0.38)  & (0.60)  & (0.64)   & (0.43)  & (0.46)  & (0.12)  & (0.13) & (0.20)  & (0.22)               & (0.22)                & (0.21)                & (0.26)              & \textbf{(0.27)}     \\ \hline
\multirow{8}{*}{Auto-mpg}    & \multirow{4}{*}{MSE}     & \multirow{2}{*}{30} & 76.60   & 34.99   & 50.43   & 51.52    & 30.41   & 24.06   & 11.59   & 11.69  & 11.52   & 11.68                & 23.60                 & 11.67                 & 11.60               & \textbf{9.63}       \\
                             &                          &                     & (12.72) & (12.83) & (18.98) & (20.31)  & (11.25) & (7.60)  & (1.42)  & (1.69) & (1.17)  & (1.25)               & (3.86)                & (1.28)                & (1.23)              & \textbf{(1.62)}     \\
                             &                          & \multirow{2}{*}{40} & 145.11  & 93.80   & 92.64   & 119.54   & 48.03   & 61.49   & 20.14   & 18.30  & 19.25   & 21.11                & 34.57                 & 21.08                 & 18.23               & \textbf{11.11}      \\
                             &                          &                     & (19.52) & (23.82) & (32.76) & (20.53)  & (15.25) & (28.88) & (5.44)  & (4.52) & (4.69)  & (6.18)               & (8.40)                & (6.30)                & (4.97)              & \textbf{(3.03)}     \\ \cline{2-17} 
                             & \multirow{4}{*}{MAE}     & \multirow{2}{*}{30} & 7.97    & 4.47    & 5.98    & 6.33     & 4.47    & 3.89    & 2.52    & 2.51   & 2.48    & 2.50                 & 3.76                  & 2.50                  & 2.50                & \textbf{2.19}       \\
                             &                          &                     & (0.76)  & (1.03)  & (1.34)  & (1.63)   & (0.94)  & (0.78)  & (0.23)  & (0.18) & (0.18)  & (0.19)               & (0.38)                & (0.19)                & (0.17)              & \textbf{(0.18)}     \\
                             &                          & \multirow{2}{*}{40} & 11.59   & 8.09    & 8.16    & 10.91    & 5.78    & 6.68    & 3.46    & 3.15   & 3.18    & 3.39                 & 4.62                  & 3.36                  & 3.24                & \textbf{2.31}       \\
                             &                          &                     & (1.02)  & (1.16)  & (1.84)  & (1.75)   & (1.03)  & (1.92)  & (0.48)  & (0.37) & (0.43)  & (0.50)               & (0.59)                & (0.52)                & (0.41)              & \textbf{(0.30)}     \\ \hline
\multirow{8}{*}{Housing}     & \multirow{4}{*}{MSE}     & \multirow{2}{*}{30} & 55.46   & 52.66   & 52.79   & 75.37    & 78.40   & 83.83   & 27.49   & 26.24  & 25.07   & 27.58                & 25.45                 & 26.32                 & 25.06               & \textbf{22.13}      \\
                             &                          &                     & (8.52)  & (15.47) & (7.06)  & (27.20)  & (8.87)  & (15.28) & (11.60) & (8.20) & (5.82)  & (6.80)               & (8.95)                & (10.12)               & (7.59)              & \textbf{(3.71)}     \\
                             &                          & \multirow{2}{*}{40} & 88.85   & 101.70  & 83.96   & 109.01   & 124.34  & 124.14  & 30.70   & 35.84  & 32.63   & 32.66                & 31.15                 & 34.92                 & 33.00               & \textbf{24.53}      \\
                             &                          &                     & (22.73) & (13.36) & (15.88) & (32.28)  & (12.16) & (12.18) & (4.04)  & (2.72) & (4.86)  & (4.47)               & (5.46)                & (4.47)                & (5.71)              & \textbf{8.00}       \\ \cline{2-17} 
                             & \multirow{4}{*}{MAE}     & \multirow{2}{*}{30} & 6.02    & 5.61    & 5.49    & 6.88     & 7.63    & 8.31    & 3.52    & 3.54   & 3.50    & 3.52                 & 3.51                  & 3.63                  & 3.56                & \textbf{3.47}       \\
                             &                          &                     & (0.65)  & (1.06)  & (0.61)  & (0.54)   & (0.33)  & (0.85)  & (0.36)  & (0.45) & (0.27)  & (0.50)               & (0.56)                & (0.66)                & (0.43)              & \textbf{(0.57)}     \\
                             &                          & \multirow{2}{*}{40} & 7.15    & 7.46    & 7.17    & 7.87     & 8.27    & 8.26    & 4.06    & 4.41   & 4.11    & 4.14                 & 3.95                  & 4.25                  & 4.19                & \textbf{3.49}       \\
                             &                          &                     & (0.66)  & (0.53)  & (0.62)  & (0.92)   & (0.42)  & (0.42)  & (0.45)  & (0.28) & (0.31)  & (0.45)               & (0.48)                & (0.39)                & (0.46)              & \textbf{(0.48)}     \\ \hline
\multirow{8}{*}{Concrete}    & \multirow{4}{*}{MSE}     & \multirow{2}{*}{70} & 283.72  & 291.78  & 296.00  & 278.35   & 278.30  & 278.26  & 70.94   & 72.84  & 73.01   & 74.68                & 74.16                 & 71.68                 & 67.22               & \textbf{58.45}      \\
                             &                          &                     & (13.93) & (17.85) & (7.37)  & (18.24)  & (18.16) & (18.18) & (8.19)  & (7.78) & (6.65)  & (5.80)               & (4.23)                & (5.84)                & (5.82)              & \textbf{(3.52)}     \\
                             &                          & \multirow{2}{*}{80} & 420.44  & 424.03  & 423.70  & 284.03   & 284.04  & 284.02  & 88.26   & 95.09  & 86.54   & 86.40                & 80.77                 & 83.38                 & 92.48               & \textbf{59.47}      \\
                             &                          &                     & (68.96) & (57.64) & (40.60) & (15.41)  & (15.41) & (15.43) & (6.35)  & (8.56) & (13.11) & (3.13)               & (2.78)                & (3.13)                & (16.38)             & \textbf{(7.81)}     \\ \cline{2-17} 
                             & \multirow{4}{*}{MAE}     & \multirow{2}{*}{70} & 13.93   & 14.07   & 14.42   & 13.38    & 13.38   & 13.38   & 6.70    & 6.81   & 6.83    & 6.78                 & 6.51                  & 6.77                  & 6.38                & \textbf{5.86}       \\
                             &                          &                     & (1.04)  & (0.99)  & (0.64)  & (0.49)   & (0.49)  & (0.49)  & (0.54)  & (0.25) & (0.32)  & (0.81)               & (0.82)                & (0.82)                & (0.34)              & \textbf{(0.41)}     \\
                             &                          & \multirow{2}{*}{80} & 17.74   & 17.67   & 17.93   & 13.47    & 13.47   & 13.47   & 7.42    & 7.62   & 7.35    & 7.40                 & 7.26                  & 7.48                  & 7.49                & \textbf{5.71}       \\
                             &                          &                     & (1.55)  & (1.48)  & (1.26)  & (0.38)   & (0.38)  & (0.38)  & (0.46)  & (0.89) & (0.61)  & (0.68)               & (0.33)                & (0.47)                & (0.70)              & \textbf{(0.34)}     \\ \hline
\multirow{8}{*}{Power-plant} & \multirow{4}{*}{MSE}     & \multirow{2}{*}{60} & 264.98  & 201.00  & 245.19  & 262.00   & 246.15  & 268.46  & 31.96   & 28.95  & 29.38   & 32.85                & 31.30                 & 32.66                 & 29.36               & \textbf{23.54}      \\
                             &                          &                     & (49.48) & (51.72) & (35.67) & (107.86) & (69.44) & (53.87) & (2.69)  & (3.03) & (3.76)  & (3.78)               & (2.84)                & (3.95)                & (3.41)              & \textbf{(1.41)}     \\
                             &                          & \multirow{2}{*}{70} & 279.48  & 316.33  & 337.53  & 280.29   & 269.03  & 293.51  & 48.82   & 39.41  & 40.83   & 48.53                & 47.62                 & 48.62                 & 40.04               & \textbf{24.91}      \\
                             &                          &                     & (3.14)  & (72.10) & (61.63) & (76.53)  & (81.76) & (2.20)  & (1.53)  & (2.76) & (3.41)  & (1.20)               & (2.37)                & (1.22)                & (3.13)              & \textbf{(0.60)}     \\ \cline{2-17} 
                             & \multirow{4}{*}{MAE}     & \multirow{2}{*}{60} & 14.48   & 12.55   & 14.08   & 13.92    & 13.98   & 14.60   & 4.55    & 4.30   & 4.36    & 4.59                 & 4.49                  & 4.58                  & 4.37                & \textbf{3.84}       \\
                             &                          &                     & (0.10)  & (1.58)  & (0.34)  & (3.31)   & (2.39)  & (1.43)  & (0.18)  & (0.21) & (0.32)  & (0.28)               & (0.21)                & (0.31)                & (0.29)              & \textbf{(0.11)}     \\
                             &                          & \multirow{2}{*}{70} & 14.29   & 14.48   & 14.70   & 14.17    & 14.70   & 14.99   & 5.59    & 4.94   & 5.13    & 5.62                 & 5.51                  & 5.62                  & 5.01                & \textbf{3.96}       \\
                             &                          &                     & (0.16)  & (1.51)  & (0.94)  & (1.85)   & (2.56)  & (0.10)  & (0.14)  & (0.18) & (0.20)  & (0.08)               & (0.10)                & (0.07)                & (0.16)              & \textbf{(0.05)}    
\\
\bottomrule
\end{tabular}
}
}
\end{table*}
\subsection{Experimental Performance}
\noindent\textbf{Experimental results.}
Table \ref{agedb}, Table \ref{imdb} and Table \ref{mlp} show some of the experimental results on the AgeDB, IMDB-WIKI, and UCI datasets, respectively. From the three tables, we have the following observations:
1) Our proposed LM outperforms all the compared methods. This verifies that our method has the ability to figure out the true real-valued labels.
2) As $q$ increases, there is a tendency for the performance of all the methods to decrease. This is because as the size of the interval becomes larger, more interfering values are included in the interval and thus it will be more difficult to identify the true real-valued labels from the interval.
3) In our experimental setting, we set various values of $q$. The performance gap between our method and compared methods is more evident when $q$ is large. This indicates that our method has stronger robustness. It is worth noting that $q$ represents the maximum interval size allowed. In real-world scenarios, a large value of $q$ will be a more common situation because this kind of data is easier to collect.
4) The methods for interval-valued data prediction and the methods for selecting the middlemost value of the interval as the target value have similar performance. This is because when the interval predicted by the interval-valued data prediction is accurate, the middlemost value of the interval is exactly the target value of the methods that select the middlemost value of the interval as the target value.

\noindent\textbf{Performance of Increasing Training Data.}
We demonstrate in Theorem \ref{error_bound} that the model learned by our proposed LM can converge to the optimal model learned from the fully labeled data when the number of training examples for RIT approaches infinity. To empirically validate such a theoretical finding, we further conduct experiments by changing the fraction of training examples for RIT, where 100\% indicates the use of all training examples to train the model. The experimental performance of LM is shown in Figure \ref{fig2}, where the test loss of the model generally decreases when more training examples are used to train the model. This empirical observation accords with our theoretical analysis that the learned model will be closer to the optimal model, if more training examples are provided.

\noindent\textbf{More experimental results.} We provide more experimental results including the comparison results with the fully supervised method, maximum margin interval trees method (MMIT \cite{drouin2017maximum}), and more results on evaluation metrics and models in Appendix \ref{all_result}. These results also demonstrate the effectiveness of our method.
\section{Conclusion}
In this paper, we studied an interesting weakly supervised regression setting called \emph{regression with interval targets} (RIT). For the RIT setting, we first proposed a novel statistical model to describe the data generation process for RIT and demonstrated its validity. The explicitly derived data distribution can be helpful to empirical risk minimization. Then, we analyzed a simple selection method that selects a particular value in the interval as the target value to train the model. We empirically showed that this simple method could work well if the middlemost value in the interval is selected. Afterward, we proposed a statistically consistent limiting method to train the model by limiting the predictions to the interval. We further derived an estimation error bound for this method. Finally, we conducted extensive experiments on various datasets to demonstrate the effectiveness of our proposed method. In future work, it would be interesting to study a harder setting of RIT, where the true target value might be outside the given interval.

\section*{Acknowledgements}
This research is supported, in part, by the Joint NTU-WeBank Research Centre on Fintech (Award No: NWJ-2021-005), Nanyang Technological University, Singapore. Lei Feng is also supported by the National Natural Science Foundation of China (Grant No. 62106028), Chongqing Overseas Chinese Entrepreneurship and Innovation Support Program, CAAI-Huawei MindSpore Open Fund, and Chongqing Artificial Intelligence Innovation Center. Ximing Li is supported by the National Key R\&D Program of China (No. 2021ZD0112501, No. 2021ZD0112502) and the National Natural Science Foundation of China (No. 62276113).

\bibliography{main}
\bibliographystyle{icml2023}

\newpage
\appendix
\onecolumn
\section{Proofs about the Problem Setting}
\subsection{Prove of Theorem \ref{thm1}}
\label{A.1}
For a specific label $y$, we define the set of all the possible intervals whose size is less than $q$ as
\begin{equation}
\nonumber
\mathcal{S}_{q}^{y}=\{S\mid S\in \mathcal{S},|S|\leq q,y\in S \}
\end{equation}

Since $\mathcal{S}_{q}^{y}$ is a continuous space, we use the sum of the number of all possible intervals to represent the size of $\mathcal{S}_{q}^{y}$, that is, the integral over all possible intervals. We might as well discuss the range of values of $\underline{y}$ and then fix $\underline{y}$ to discuss the values of $\overline{y}$. We can easily know that $\underline{y} \in [y-q,y]$. If $\underline{y} < y-q $, even the largest interval $S^\prime = [\underline{y},\underline{y}+q]$ cannot contain $y$ ($\underline{y}+q<y$). If $\underline{y} > y$, then the interval must not contain $y$ ($\underline{y} > y$). After determining $\underline{y}$, the maximum value of $\overline{y}$ is $\underline{y}+q$ ($\overline{y}-\underline{y} \leq q$) and the minimum value is $y$ ($y\leq \overline{y}$), so $\overline{y}\in[y,\underline{y}+q]$, then $|\mathcal{S}_{q}^{y}|=\int_{y-q}^{y}\int_{y}^{\underline{y}+q}1\ \mathrm{d}\overline{y}\mathrm{d}\underline{y}=\frac{q^2}{2}$. From our formulation of the interval data distribution $\tilde{p}(\bm{x},S)$, we can obtain the simplified expression $\tilde{p}(\bm{x},S)=\frac{2}{q^2}\int_{y \in S}p(\bm{x},y)\ \mathrm{d}y$. Then, we have
\begin{align}
\nonumber
\int_{\mathcal{S}}\int_{\mathcal{X}}\tilde{p}(\bm{x},S)\mathrm{d}\bm{x}\ \mathrm{d}S &= \int_{\mathcal{X}}\int_{\mathcal{S}}\int_{y\in \mathcal{S}} \frac{2}{q^2}p(\bm{x},y)\ \mathrm{d}y\mathrm{d}S\mathrm{d}\bm{x}\\
\nonumber
&=\frac{2}{q^2}\int_{\mathcal{X}}\int_{\mathcal{Y}}\int_{\mathcal{S}_{q}^{y}} p(\bm{x},y)\ \mathrm{d}S\mathrm{d}y\mathrm{d}\bm{x}\\
\nonumber
&=\frac{2}{q^2}\int_{\mathcal{X}}\int_{\mathcal{Y}}p(\bm{x},y)\int_{\mathcal{S}_{q}^{y}}1\ \mathrm{d}S\mathrm{d}y\mathrm{d}\bm{x}\\
\nonumber
&=\frac{2}{q^2}\int_{\mathcal{X}}\int_{\mathcal{Y}}p(\bm{x},y)|\mathcal{S}_{q}^{y}|\mathrm{d}y\mathrm{d}\bm{x}\\
\nonumber
&=\frac{2}{q^2}\int_{\mathcal{X}}\int_{\mathcal{Y}}p(\bm{x},y)\frac{q^2}{2}\mathrm{d}y\mathrm{d}\bm{x}\\
\nonumber
&=\int_{\mathcal{X}}\int_{\mathcal{Y}}p(\bm{x},y)\mathrm{d}y\mathrm{d}\bm{x}\\
\nonumber
&=1,
\end{align}
which concludes the proof of Theorem \ref{thm1}.\qed

\subsection{Prove of Theorem \ref{key}}
\label{key_}
It is intuitive to express $\mathrm{Pr}[y\in S|\bm{x},S]$ as
\begin{align}
\nonumber
\mathrm{Pr}[y\in S|\bm{x},S]&= 1-\mathrm{Pr}[y\notin S|\bm{x},S]\\
\nonumber
&= 1-\int_{y\notin S}p(y|\bm{x},S)\mathrm{d}y\\
\nonumber
&=1-\int_{y\notin S} \frac{p(S|y,\bm{x})p(y,\bm{x})}{p(S|\bm{x})}\mathrm{d}y\\
\nonumber
&=1-\int_{y\notin S} \frac{p(S|y)p(y,\bm{x})}{\int_{y'\in S}p(S|y')p(y'|\bm{x})\mathrm{d}y'}\mathrm{d}y\\
\nonumber
&=1-|\mathcal{S}_{q}^{y'}|\int_{y\notin S} \frac{p(S|y)p(y,\bm{x})}{\int_{y\in S}p(y'|\bm{x})\mathrm{d}y'}\mathrm{d}y\\
\nonumber
&=1-\frac{q^2}{2}\int_{y\notin S} \frac{p(S|y)p(y,\bm{x})}{\int_{y'\in S}p(y'|\bm{x})\mathrm{d}y'}\mathrm{d}y\\
\nonumber
&=1,
\end{align}
where the last equality holds because $p(S|y) = 0$ if $y \notin S$ , in terms of Eq.~(\ref{equ3}). Which concludes the proof of Theorem \ref{key}.\qed
\subsection{Prove of Lemma \ref{lem1}}
\label{A.2}
We consider the case where the true label $y$ is a specific value, then we have
\begin{align}
\nonumber
p(y\in S,y|\bm{x})&= \mathrm{Pr}[y\in S|\bm{x},y]p(y|\bm{x})\\
\nonumber
&=\int_{\mathcal{S}}p(y\in S,S|\bm{x},y)p(y|\bm{x})\ \mathrm{d}S\\
\nonumber
&=\int_{\mathcal{S}}\mathrm{Pr}[y\in S|\bm{x},y,S]p(y|\bm{x})p(S|\bm{x},y)\ \mathrm{d}S\\
\nonumber
&=\int_{\mathcal{S}}\mathrm{Pr}[y\in S|\bm{x},y,S]p(y|\bm{x})p(S)\ \mathrm{d}S
\end{align}

where the last equality holds due to the fact that for each example $(\bm{x},y)$, $S$ is uniformly and randomly chosen, if $q$ is specific, $p(S)=\frac{1}{|\mathcal{S}_{q}|}$. As with $\mathcal{S}_{q}^{y}$, we integrate over all possible intervals to calculate the size of $|\mathcal{S}_{q}|$. We can easily know that when $\underline{y} \in [y_{\mathrm{min}}-q,y_{\mathrm{min}}]$, $\overline{y} \in [y_{\mathrm{min}},\underline{y}+q]$, when $\underline{y} \in [y_{\mathrm{min}},y_{\mathrm{max}}]$, $\overline{y} \in [\underline{y},\underline{y}+q]$, so $|S_{q}|= \int_{y_{\mathrm{min}}-q}^{y_{\mathrm{min}}}\int_{y_{\mathrm{min}}}^{\underline{y}+q}1\ \mathrm{d} y_{r}\mathrm{d} \underline{y}+ \int_{y_{\mathrm{min}}}^{y_{\mathrm{max}}}\int_{\underline{y}}^{\underline{y}+q}1\ \mathrm{d} \overline{y}\mathrm{d} \underline{y} = \frac{1}{2}q^2 + q(y_{\mathrm{max}}-y_{\mathrm{min}}) $, we have
\begin{align}
\nonumber
p(y\in S,y|\bm{x})&=\int_{\mathcal{S}}\mathrm{Pr}[y\in S|\bm{x},y,S]p(y|\bm{x})p(S)\ \mathrm{d}S\\
\nonumber
&=\frac{2}{2q(y_{\mathrm{max}}-y_{\mathrm{min}})+q^2}\int_{\mathcal{S}}\mathrm{Pr}[y\in S|\bm{x},y,S]p(y|\bm{x})\ \mathrm{d}S\\
\nonumber
&=\frac{2}{2q(y_{\mathrm{max}}-y_{\mathrm{min}})+q^2}\int_{\mathcal{S}}\mathrm{Pr}[y\in S|\bm{x},y,S]\mathrm{d}S\ p(y|\bm{x})\\
\nonumber
&=\frac{2}{2q(y_{\mathrm{max}}-y_{\mathrm{min}})+q^2}|\mathcal{S}_{q}^{y}|p(y|\bm{x})\\
\nonumber
&=\frac{2}{2q(y_{\mathrm{max}}-y_{\mathrm{min}})+q^2}\frac{q^2}{2}p(y|\bm{x}) \quad (\because |\mathcal{S}_{q}^{y}| = \frac{q^2}{2})\\
\nonumber
&=\frac{q}{2(y_{\mathrm{max}}-y_{\mathrm{min}})+q}p(y|\bm{x})
\nonumber
\end{align}
By integrating $y$ on both sides, we can obtain
\begin{gather}
\nonumber
\mathrm{Pr}[y \in S|\bm{x}] = \frac{q}{2(y_{\mathrm{max}}-y_{\mathrm{min}})+q}
\end{gather}
which concludes the proof of Lemma \ref{lem1}.\qed

\subsection{Prove of Theorem \ref{thm2}}
\label{A.3}
Let us express $p(S|y\in S,\bm{x})$ as
\begin{align}
\nonumber
p(S|y\in S,\bm{x})&=\frac{p(y\in S,S|\bm{x})}{\mathrm{Pr}[y\in S|\bm{x}]}\\
\nonumber
&=\frac{\mathrm{Pr}[y\in S|S,\bm{x}]p(S|\bm{x})}{\mathrm{Pr}[y\in S|\bm{x}]}\\
\nonumber
&=\frac{\mathrm{Pr}[y\in S|S,\bm{x}]p(S)}{\mathrm{Pr}[y\in S|\bm{x}]}
\end{align}
where the last equality holds due to the fact that for each instance $\bm{x}$, $S$ is uniformly and randomly chosen. Since $p(S)=\frac{1}{|S_{q}|}$ if $q$ is specific. We can easily know that when $\underline{y} \in [y_{\mathrm{min}}-q,y_{\mathrm{min}}]$, $\overline{y} \in [y_{\mathrm{min}},\underline{y}+q]$, when $\underline{y} \in [y_{\mathrm{min}},y_{\mathrm{max}}]$, $\overline{y} \in [\underline{y},\underline{y}+q]$, so $|S_{q}|= \int_{y_{\mathrm{min}}-q}^{y_{\mathrm{min}}}\int_{y_{\mathrm{min}}}^{\underline{y}+q}1\ \mathrm{d} \overline{y}\mathrm{d} \underline{y}+ \int_{y_{\mathrm{min}}}^{y_{\mathrm{max}}}\int_{\underline{y}}^{\underline{y}+q}1\ \mathrm{d} \overline{y}\mathrm{d} \underline{y} = \frac{1}{2}q^2 + q(y_{\mathrm{max}}-y_{\mathrm{min}}) $, we have

\begin{align}
\nonumber
p(S|y\in S,\bm{x})&=\frac{\mathrm{Pr}[y\in S|S,\bm{x}]p(S)}{\mathrm{Pr}[y\in S|\bm{x}]}\\
\nonumber
&=\frac{2}{2q(y_{\mathrm{max}}-y_{\mathrm{min}})+q^2}\frac{\mathrm{Pr}[y\in S|S,\bm{x}]}{\mathrm{Pr}[y\in S|\bm{x}]}\\
\nonumber
&=\frac{2}{2q(y_{\mathrm{max}}-y_{\mathrm{min}})+q^2}\frac{2(y_{\mathrm{max}}-y_{\mathrm{min}}+q)}{q}\mathrm{Pr}[y\in S|S,\bm{x}] \quad (by\ Lemma\ \ref{lem1})\\
\nonumber
&=\frac{2}{q^2}\mathrm{Pr}[y\in S|S,\bm{x}]\\
\nonumber
&=\int_{y\in S}\frac{2}{q^2}p(y|\bm{x}) \mathrm{d}y\\
\nonumber
\end{align}
By multiplying $p(\bm{x})$ on both side, we have
\begin{align}
\nonumber
p(\bm{x},S|y\in S)&=\int_{y\in S}\frac{2}{q^2}p(\bm{x},y)\mathrm{d}y\\
\nonumber
&=\int_{y_l}^{y_r}\frac{2}{q^2}p(\bm{x},y)\mathrm{d}y\\
\nonumber
&=\tilde{p}(\bm{x},S)
\end{align}
which concludes the proof of Theorem \ref{thm2}.\qed
\section{Proofs of The Model Consistent}
\subsection{Prove of Theorem \ref{op-lm}}
\label{B.1}
First, we prove that the optimal model $f^\star$ learned from ordinary regression expected risk (\ref{expected_regression}) is also the optimal model for $R_{LM}(f)$ as follows. 
\begin{equation}
\begin{split}
R_{\mathrm{LM}}(f^\star)
	   &=\mathbb{E}_{\tilde{p}(\bm{x},S)}[\ell_{\mathrm{LM}}(f^{\star}(\bm{x}),S)]\\
	   &=\int_{\mathcal{X}}\int_{\mathcal{S}} \tilde{p}(\bm{x},S)\ell_{\mathrm{LM}}(f^{\star}(\bm{x}),S) \mathrm{d}S \mathrm{d}\bm{x}\\
	   &=\int_{\mathcal{X}}\int_{\mathcal{S}}\int_{\mathcal{Y}} p(\bm{x},y,S)\ell_{\mathrm{LM}}(f^{\star}(\bm{x}),S)\mathrm{d}y \mathrm{d}S \mathrm{d}\bm{x}\\
	   &=\int_{\mathcal{X}}\int_{\mathcal{S}}\int_{\mathcal{Y}} p(S|\bm{x},y)p(y|\bm{x})p(\bm{x})\ell_{\mathrm{LM}}(f^{\star}(\bm{x}),S)\mathrm{d}y \mathrm{d}S \mathrm{d}\bm{x}\\
	   &=\int_{\mathcal{X}}\int_{\mathcal{Y}}p(y|\bm{x})p(\bm{x})\ell(f^{\star}(\bm{x}),y)\mathrm{d}y \mathrm{d}\bm{x}\int_{\mathcal{S}} p(S|\bm{x},y) \mathrm{d}S\\
	   &=\int_{\mathcal{X}}\int_{\mathcal{Y}}\ell(f^{\star}(\bm{x}),y)p(\bm{x},y)\mathrm{d}y \mathrm{d}\bm{x}\\
	   &=R(f^{\star})=0
\end{split}
\end{equation}
where we used the equality $\ell_{\mathrm{LM}}(f^{\star}(\bm{x}),S) = \ell(f^{\star}(\bm{x}),y)$. This because when the true label $y \in S$, $\ell_{\mathrm{LM}}(f^{\star}(\bm{x}),S) = \mathbb{I}_{\{\underline{y}-f^{\star}(\bm{x})>0\}} + \mathbb{I}_{\{f^{\star}(\bm{x})-\overline{y}>0\}} = \ell(f^{\star}(\bm{x}),y)=0$. Therefore $f^{\star}$ is the optimal model for $R_{\mathrm{LM}}$.

On the other hand, we prove that $f^{\star}$ is the sole optimal model for $R_{\mathrm{LM}}$ by contradiction. Specifically, we assume that there is at least one other model $g$ that makes $R_{\mathrm{LM}}(g)=0$ and predicts a label $y_g \neq y$ for at least one instance $\bm{x}$. Therefore, for any $S$ containing $y_g$ we have
\begin{gather}
\ell_{\mathrm{LM}}(g(\bm{x}),S)=\mathbb{I}_{\{\underline{y}-y_{g}>0\}} + \mathbb{I}_{\{y_{g}-\overline{y}>0\}} = 0
\end{gather}

Nevertheless, the above equality could be always true on the condition that $y_{g}$ is invariably included in the interval $S$ of $\bm{x}$. In the problem setting, there is no other false label that \emph{always} occurs with true label in the interval $S$. Therefore, there is one, and only one minimizer of $R_{\mathrm{LM}}$, which is the same as the minimizer $f^\star$ learned from fully labeled data. The proof is completed.\qed
\subsection{Prove of Theorem \ref{lm-su}}
\label{B.2}
First, we prove that the optimal model $f_{\mathrm{LM}}^\star$ learned from limiting method expected risk (\ref{risk_lm}) is also the optimal model for $R^{\psi}_{\mathrm{LM}}(f)$ as follows.
\begin{equation}
\begin{split}
R^{\psi}_{\mathrm{LM}}(f^{\star}_{\mathrm{LM}})
	   &=\mathbb{E}_{\tilde{p}(\bm{x},S)}[\psi_{\mathrm{LM}}(f^{\star}_{\mathrm{LM}}(\bm{x}),S)]\\
	   &=\int_{\mathcal{X}}\int_{\mathcal{S}} \tilde{p}(\bm{x},S)\psi_{\mathrm{LM}}(f^{\star}_{\mathrm{LM}}(\bm{x}),S) \mathrm{d}S \mathrm{d}\bm{x}\\
	   &=\int_{\mathcal{X}}\int_{\mathcal{S}}\int_{\mathcal{Y}} p(\bm{x},y,S)\psi_{\mathrm{LM}}(f^{\star}_{\mathrm{LM}}(\bm{x}),S)\mathrm{d}y \mathrm{d}S \mathrm{d}\bm{x}\\
	   &=\int_{\mathcal{X}}\int_{\mathcal{S}}\int_{\mathcal{Y}} p(S|\bm{x},y)p(y|\bm{x})p(\bm{x})\psi_{\mathrm{LM}}(f^{\star}_{\mathrm{LM}}(\bm{x}),S)\mathrm{d}y \mathrm{d}S \mathrm{d}\bm{x}\\
	   &=\int_{\mathcal{X}}\int_{\mathcal{Y}}p(y|\bm{x})p(\bm{x})\ell_{\mathrm{LM}}(f^{\star}_{\mathrm{LM}}(\bm{x}),y)\mathrm{d}y \mathrm{d}\bm{x}\int_{\mathcal{S}} p(S|\bm{x},y) \mathrm{d}S\\
	   &=\int_{\mathcal{X}}\int_{\mathcal{Y}}\ell_{\mathrm{LM}}(f^{\star}_{\mathrm{LM}}(\bm{x}),y)p(\bm{x},y)\mathrm{d}y \mathrm{d}\bm{x}\\
	   &=R_{\mathrm{LM}}(f^{\star}_{\mathrm{LM}})=0
\end{split}
\end{equation}
where we used the equality $\psi_{\mathrm{LM}}(f^{\star}_{\mathrm{LM}}(\bm{x}),S) = \ell_{\mathrm{LM}}(f^{\star}_{\mathrm{LM}}(\bm{x}),y)$. This because when the true label $y \in S$, $\psi_{\mathrm{LM}}(f^{\star}_{\mathrm{LM}}(\bm{x}),S) =\max(0,\underline{y}-f^{\star}_{\mathrm{LM}}(\bm{x}))+\max(0,f^{\star}_{\mathrm{LM}}(\bm{x})-\overline{y}) = \ell_{\mathrm{LM}}(f^{\star}_{\mathrm{LM}}(\bm{x}),y)=0$. Therefore $f^{\star}_{\mathrm{LM}}$ is the optimal model for $R^{\psi}_{\mathrm{LM}}$.

On the other hand, we prove that $f^{\star}_{\mathrm{LM}}$ is the sole optimal model for $R^{\psi}_{\mathrm{LM}}$ by contradiction. Specifically, we assume that there is at least one other model $h$ that makes $R^{\psi}_{\mathrm{LM}}(h)=0$ and predicts a label $y_h \neq y$ for at least one instance $\bm{x}$. Therefore, for any $S$ containing $y_h$ we have
\begin{gather}
\psi_{\mathrm{LM}}(h(\bm{x}),S)=\mathrm{max}(0,\underline{y}-y_h)+\mathrm{max}(0,y_h-\overline{y})= 0
\end{gather}

Nevertheless, the above equality could be always true on the condition that $y_{h}$ is invariably included in the interval $S$ of $\bm{x}$. In the problem setting, there is no other false label that \emph{always} occurs with true label in the interval $S$. Therefore, there is one, and only one minimizer of $R^{\psi}_{\mathrm{LM}}$, which is the same as the minimizer $f^{\star}_{\mathrm{LM}}$ learned from limiting method. By Theorem \ref{op-lm}, $f^{\star}_{\mathrm{LM}}$ is the same as the minimizer $f^\star$ learned from fully labeled data. The proof is completed.\qed
\section{Proof of Corollary \ref{mae_surr}}
For any interval instance $(\bm{x},S)$, We consider three possible cases of model prediction: the predicted value is on the left side of the interval ($f(x)<\underline{y}$), the predicted value is on the right side of the interval ($f(x)>\overline{y}$) and the predicted value is exactly inside the interval ($\underline{y}\leq f(x)\leq \overline{y}$).

If the predicted value of the model lie on the left side of the interval, the losses of AVGL\_MAE and the surrogate method are as follows.
\begin{align}
\nonumber
\ell_{\mathrm{avgl\_mae}}(f(\bm{x}), S) &= \frac{1}{2}(|f(\bm{x})-\underline{y}| +|f(\bm{x})- \overline{y})|)=\frac{1}{2}(\underline{y}-f(\bm{x}) +\overline{y}-f(\bm{x}))=\frac{\underline{y} +\overline{y}}{2}-f(\bm{x}),\\
\nonumber
\psi_{\mathrm{LM}}(f(\bm{x}), S) &=\max(0,\underline{y}-f(\bm{x}))+\max(0,f(\bm{x})-\overline{y})=\underline{y}-f(\bm{x})+0=\underline{y}-f(\bm{x}).
\end{align}

If the predicted value of the model lie on the right side of the interval, the losses of AVGL\_MAE and the surrogate method are as follows.
\begin{align}
\nonumber
\ell_{\mathrm{avgl\_mae}}(f(\bm{x}), S) &= \frac{1}{2}(|f(\bm{x})-\underline{y}| +|f(\bm{x})- \overline{y})|)=\frac{1}{2}(f(\bm{x})-\underline{y} +f(\bm{x})-\overline{y})=f(\bm{x})-\frac{\underline{y} +\overline{y}}{2},\\
\nonumber
\psi_{\mathrm{LM}}(f(\bm{x}), S) &=\max(0,\underline{y}-f(\bm{x}))+\max(0,f(\bm{x})-\overline{y})=0+f(\bm{x})-\overline{y}=f(\bm{x})-\overline{y}.
\end{align}

If the predicted value of the model lie in the interval, the losses of AVGL\_MAE and the surrogate method are as follows.
\begin{align}
\nonumber
\ell_{\mathrm{avgl\_mae}}(f(\bm{x}), S) &= \frac{1}{2}(|f(\bm{x})-\underline{y}| +|f(\bm{x})- \overline{y})|)=0\\
\nonumber
\psi_{\mathrm{LM}}(f(\bm{x}), S) &=\max(0,\underline{y}-f(\bm{x}))+\max(0,f(\bm{x})-\overline{y})=0
\end{align}
We can see that the losses of AVGL\_MAE and the surrogate loss differ only in constant terms on the three possible cases. If our training model uses gradient descent, the gradients of AVGL\_MAE and the surrogate method are the same on all three possible cases.\qed
\section{Proof of Theorem \ref{error_bound}}
\label{errorbound}
Before directly proving Theorem \ref{error_bound}, we first introduce the following lemma.
\begin{lemma}
\label{est_lemma}
Let $\widehat{f}$ be the empirical risk minimizer (i.e., $\widehat{f}=\argmin_{f\in\mathcal{F}}\widehat{R}(f)$) and $f^\star$ be the true risk minimizer (i.e., $f^\star=\argmin_{f\in\mathcal{F}}R(f)$), then the following inequality holds:
\begin{gather}
\nonumber
R(\widehat{f})-R(f^\star)\leq 2\sup_{f\in\mathcal{F}}\left|\widehat{R}(f)-R(f)\right|.
\end{gather}
\end{lemma}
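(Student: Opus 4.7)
The plan is to prove this lemma by the standard add-and-subtract telescoping trick, inserting the empirical risk evaluated at the two distinguished hypotheses $\widehat{f}$ and $f^\star$ as intermediate quantities. No complexity measures or probabilistic tools are needed at this stage; the inequality is purely deterministic in the sample, and follows from the defining optimality of the empirical risk minimizer combined with two elementary upper-bounding steps.

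Concretely, I would first decompose
\[
R(\widehat{f}) - R(f^\star) = \underbrace{\bigl(R(\widehat{f}) - \widehat{R}(\widehat{f})\bigr)}_{(\mathrm{I})} + \underbrace{\bigl(\widehat{R}(\widehat{f}) - \widehat{R}(f^\star)\bigr)}_{(\mathrm{II})} + \underbrace{\bigl(\widehat{R}(f^\star) - R(f^\star)\bigr)}_{(\mathrm{III})}.
\]
Term $(\mathrm{II})$ is non-positive, by the definition of $\widehat{f}$ as a minimizer of $\widehat{R}$ over $\mathcal{F}$ together with the fact that $f^\star \in \mathcal{F}$, so it can be dropped from any upper bound. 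For terms $(\mathrm{I})$ and $(\mathrm{III})$, I would pass to absolute values and then to the uniform deviation, noting that both $\widehat{f}$ and $f^\star$ belong to $\mathcal{F}$, which yields
\[
(\mathrm{I}) \leq \bigl|R(\widehat{f}) - \widehat{R}(\widehat{f})\bigr| \leq \sup_{f\in\mathcal{F}} \bigl|\widehat{R}(f) - R(f)\bigr|,
\]
and symmetrically the same upper bound for $(\mathrm{III})$. Summing the two uniform-deviation bounds produces the factor of $2$ in the statement.

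The main ``obstacle'' here is purely notational rather than mathematical: one must be careful to pass through the absolute value when bounding $(\mathrm{I})$, whose natural sign is not known a priori, rather than asserting it is already non-negative. Once the telescoping identity is written down and the empirical optimality of $\widehat{f}$ over $f^\star \in \mathcal{F}$ is used to discard $(\mathrm{II})$, the remainder of the argument is immediate and requires no assumptions on the loss beyond what is implicit in the definitions of $R$ and $\widehat{R}$.
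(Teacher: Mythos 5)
Your proposal is correct and takes essentially the same route as the paper's proof, which is the standard add-and-subtract uniform-deviation argument (the paper cites \citet{mohri2012foundations} for it). Your explicit three-term decomposition, dropping the middle term via the empirical optimality of $\widehat{f}$ and bounding the other two by the supremum, is in fact a cleaner write-up than the paper's own chain of inequalities, whose intermediate line contains an apparent typo.
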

\begin{proof}
It is intuitive to obtain
\begin{align}
\nonumber
R(\widehat{f})-R(f^\star)&\leq R(\widehat{f})-\widehat{R}(\widehat{f})+\widehat{R}(\widehat{f}) - R(f^\star)\\
\nonumber
&\leq R(\widehat{f})-\widehat{R}(\widehat{f})+R(\widehat{f}) - R(f^\star)\\
\nonumber
&\leq 2\sup_{f\in\mathcal{F}}\left|\widehat{R}(f)-R(f)\right|,
\end{align}
which completes the proof. The same proof has been provided in \citet{mohri2012foundations}.
\end{proof}
Recall that $R^{\psi}_{\mathrm{LM}}(f)$ is denoted by 
\begin{align}
\nonumber
R^{\psi}_{\mathrm{LM}}(f)&=\mathbb{E}_{p(\bm{x},\underline{y},\overline{y})}\big[\psi(\underline{y}-f(\bm{x}))+\psi(f(\bm{x})-\overline{y})\big]\\
\nonumber
&=\mathbb{E}_{p(\bm{x},\underline{y},\overline{y})}\big[\max(\underline{y}-f(\bm{x}), 0) + \max(f(\bm{x})-\overline{y}, 0)\big]\\
\nonumber
&=\mathbb{E}_{p(\bm{x},\underline{y},\overline{y})}\big[\max(\underline{y}-f(\bm{x}), 0)\big] + \mathbb{E}_{p(\bm{x},\underline{y},\overline{y})}\big[\max(f(\bm{x})-\overline{y}, 0)\big]\\
\nonumber
&=R^{\psi, l}_{\mathrm{LM}}(f) + R^{\psi, r}_{\mathrm{LM}}(f),
\end{align}
where we have introduced $R^{\psi, l}_{\mathrm{LM}}(f) = \mathbb{E}_{p(\bm{x},\underline{y},\overline{y})}\big[\max(\underline{y}-f(\bm{x}), 0)\big]$ and $R^{\psi, r}_{\mathrm{LM}}(f)=\mathbb{E}_{p(\bm{x},\underline{y},\overline{y})}\big[\max(f(\bm{x})-\overline{y}, 0)\big]$ in the last equality.
In this way, we have 
\begin{align}
\nonumber
R^{\psi}_{\mathrm{LM}}(\widehat{f}_{\mathrm{LM}}) - R^{\psi}_{\mathrm{LM}}(f^\star) &= R^{\psi}_{\mathrm{LM}}(\widehat{f}_{\mathrm{LM}}) - R^{\psi}_{\mathrm{LM}}(f^\star_{\mathrm{LM}}) \\
\nonumber
&\leq 2\sup_{f\in\mathcal{F}}\left|R^{\psi}_{\mathrm{LM}}(\widehat{f}_{\mathrm{LM}}) - R^{\psi}_{\mathrm{LM}}(f^\star_{\mathrm{LM}})\right|\\
\nonumber
&\leq 2\sup_{f\in\mathcal{F}}\left|R^{\psi,l}_{\mathrm{LM}}(\widehat{f}_{\mathrm{LM}}) - R^{\psi,l}_{\mathrm{LM}}(f^\star_{\mathrm{LM}})\right| + 2\sup_{f\in\mathcal{F}}\left|R^{\psi,r}_{\mathrm{LM}}(\widehat{f}_{\mathrm{LM}}) - R^{\psi,r}_{\mathrm{LM}}(f^\star_{\mathrm{LM}})\right|
\end{align}
where the first equality holds, and the last inequality means that we can directly bound $\sup_{f\in\mathcal{F}}\left|R^{\psi,l}_{\mathrm{LM}}(\widehat{f}_{\mathrm{LM}}) - R^{\psi,l}_{\mathrm{LM}}(f^\star_{\mathrm{LM}})\right|$ and $\sup_{f\in\mathcal{F}}\left|R^{\psi,r}_{\mathrm{LM}}(\widehat{f}_{\mathrm{LM}}) - R^{\psi,r}_{\mathrm{LM}}(f^\star_{\mathrm{LM}})\right|$. Based on the assumptions introduced in Theorem \ref{error_bound} and using the discussion of Theorem 10.6 in \citet{mohri2012foundations}, with probability $1-\delta$,
\begin{align}
\nonumber
\sup_{f\in\mathcal{F}}\left|R^{\psi,l}_{\mathrm{LM}}(\widehat{f}_{\mathrm{LM}}) - R^{\psi,l}_{\mathrm{LM}}(f^\star_{\mathrm{LM}})\right|  &\leq M\sqrt{\frac{2d\log\frac{en}{d}}{n}} + M\sqrt{\frac{\log\frac{2}{\delta}}{2n}},\\
\nonumber
\sup_{f\in\mathcal{F}}\left|R^{\psi,r}_{\mathrm{LM}}(\widehat{f}_{\mathrm{LM}}) - R^{\psi,r}_{\mathrm{LM}}(f^\star_{\mathrm{LM}})\right| &\leq M^\prime\sqrt{\frac{2d^\prime\log\frac{en}{d^\prime}}{n}} + M^\prime\sqrt{\frac{\log\frac{2}{\delta}}{2n}}.
\end{align}
Therefore, with probability $1-\delta$,
\begin{align}
\nonumber
R^{\psi}_{\mathrm{LM}}(\widehat{f}_{\mathrm{LM}}) - R^{\psi}_{\mathrm{LM}}(f^\star) \leq 2M\sqrt{\frac{2d\log\frac{en}{d}}{n}} + 2M^\prime\sqrt{\frac{2d^\prime\log\frac{en}{d^\prime}}{n}} + 2(M+M^\prime)\sqrt{\frac{\log\frac{4}{\delta}}{2n}},
\end{align}
which completes the proof of Theorem \ref{error_bound}.
\section{Additional Information of Experiments}
\label{information}
\subsection{Details of Datasets}
\label{data}
In our experiments, we used AgeDB, IMDB-WIKI, STS-B and 6 UCI benchmark datasets including Abalone, Airfoil, Auto-mpg, Housing, Concrete and Power-plant. For each dataset, we follow the data distribution proposed in Section 3.1 to generate interval data. Then we randomly split each dataset into training, validation, and test sets by the proportions of 60\%, 20\%, and 20\%, respectively. Here, we provide the detailed information of these datasets we used in our experiments.

AgeDB is a regression dataset on age prediction collected by \cite{8014984}. It contains 16.4K face images with a minimum age of 0 and a maximum age of 101. We generated the interval regression dataset AgeDB-Interval at $q$ = 10, 20, 30, 40, and 50, respectively, and manually corrected the unreasonable intervals, such as intervals containing negative ages and intervals containing too old ages (less than 0 and greater than 150).

IMDB-WIKI is a regression dataset about age prediction collected by \cite{rothe2018deep}. It contains 523.0K face images, and we filtered the images that do not match the criteria and finally kept 213.5K images, where the minimum age is 0 years and the maximum age is 186 years. We generated the interval regression dataset IMDB-WIKI-Interval at $q$ = 20, 30, and 40, respectively, and manually corrected the unreasonable intervals, such as those containing negative ages and those containing too old ages (less than 0 and greater than 200).

Semantic Textual Similarity Benchmark (STS-B) \cite{cer2017semeval} is a collection of sentence pairs extracted from news headlines, video and image captions, and natural language inference data. Each sentence pair is scored for similarity by multiple annotators, and the final score is averaged as the final score. We created a dataset with 15.7K from \cite{yang2021delving}. We generated interval regression datasets for STS-B-Interval at $q$ = 3.0, 3.5, 4.0, 4.5 and 5.0, respectively.

We conducted experiments on 6 UCI benchmark datasets including Abalone, Airfoil, Auto-mpg, Housing, Concrete and Power-plant. All of these datasets can be downloaded from the UCI Machine Learning. Based on the span of the dataset labels, we selected two larger q values to generate interval regression data for each dataset.


\subsection{Evaluation Metrics}
We describe in detail all the evaluation metrics we used in our experiments.

\noindent\textbf{MSE.} The mean squared error (MSE) is defined as $\frac{1}{n}\sum_{i=1}^{n}(y_i - \widehat{y}_i)^2$, where $n$ denotes the number of samples, $y_i$ denotes the ground truth value, and $\widehat{y}_i$ denotes the predicted value. MSE represents the averaged squared difference between the ground truth and predicted values over all samples.

\noindent\textbf{MAE.} The mean absolute error (MAE) is defined as $\frac{1}{n}\sum_{i=1}^{n}|y_i - \widehat{y}_i|$, where $n$ denotes the number of samples, $y_i$ denotes the ground truth value, and $\widehat{y}_i$ denotes the predicted value. MAE represents the averaged absolute difference
between the ground truth and predicted values over all samples.

\noindent\textbf{GM.} We use the Geometric Mean (GM) proposed by \cite{yang2021delving} as our evaluation method, and is defined as $(\prod\nolimits_{i=1}^{n}e_{i})^{\frac{1}{n}}$, where $e_{i}\triangleq|y_i - \widehat{y}_i|$. GM is using the geometric mean to describe the fairness of the model predictions rather than the arithmetic mean.

\noindent\textbf{Pearson correlation.} Pearson correlation is an evaluation of the linear relationship between the predicted value and the ground truth value, and is defined as $\frac{\sum_{i=1}^{n}(y_i -\overline{y})(\widehat{y}_i -\overline{\widehat{y}})}{\sqrt{\sum_{i=1}^{n}(y_i - \overline{y})^2}\sqrt{\sum_{i=1}^{n}(\widehat{y}_i -\overline{\widehat{y}})^2}}$, where $\overline{y}$ denotes the average of all ground truth values, $\overline{\widehat{y}}$ denotes the average of all predicted values, i.e., $\overline{y} = \frac{1}{n}\sum_{i=1}^{n}y_i$, $\overline{\widehat{y}} = \frac{1}{n}\sum_{i=1}^{n}\widehat{y}_i$.
\begin{table}[!t]
\caption{Complete evaluation results on AgeDB}
\label{cagedb}
\resizebox{1.00\textwidth}{!}{
\setlength{\tabcolsep}{3.0mm}{
\begin{tabular}{cc|ccccc|ccccc|ccccc}
\toprule
\multicolumn{2}{c|}{Metric}                     & \multicolumn{5}{c|}{MSE}                                                                & \multicolumn{5}{c|}{MAE}                                                                & \multicolumn{5}{c}{GM}                                                                  \\ \hline
\multicolumn{2}{c|}{Approach}                    & $q=30$            & $q=40$            & $q=50$            & $q=60$            & $q=70$            & $q=30$            & $q=40$            & $q=50$            & $q=60$            & $q=70$            & $q=30$            & $q=40$            & $q=50$            & $q=60$            & $q=70$            \\ \hline
\multicolumn{2}{c|}{\multirow{2}{*}{Supervised}} & \multicolumn{5}{c|}{102.71}                                                             & \multicolumn{5}{c|}{7.82}                                                               & \multicolumn{5}{c}{5.22}                                                                \\
\multicolumn{2}{c|}{}                            & \multicolumn{5}{c|}{(3.12)}                                                             & \multicolumn{5}{c|}{(0.14)}                                                             & \multicolumn{5}{c}{(0.13)}                                                              \\ \hline
\multirow{6}{*}{LEFT} & \multirow{2}{*}{MAE} & 158.38 & 210.34 & 205.75 & 283.83 & 355.59 & 9.88 & 11.59 & 11.27 & 13.62 & 14.93 & 6.32 & 7.69 & 7.37 & 9.14 & 10.00 \\
 &  & (6.71) & (19.85) & (22.04) & (15.36) & (105.16) & (0.22) & (0.61) & (0.80) & (0.62) & (2.38) & (0.13) & (0.32) & (0.67) & (0.40) & (1.77) \\
 & \multirow{2}{*}{MSE} & 134.99 & 196.83 & 221.54 & 295.49 & 347.27 & 9.25 & 11.19 & 11.91 & 13.90 & 14.98 & 6.08 & 7.46 & 7.88 & 9.54 & 10.47 \\
 &  & (4.03) & (19.28) & (18.04) & (32.09) & (80.77) & (0.06) & (0.71) & (0.48) & (1.24) & (2.10) & (0.04) & (0.55) & (0.39) & (1.18) & (1.83) \\
 & \multirow{2}{*}{Huber} & 156.34 & 175.95 & 208.03 & 317.17 & 360.11 & 9.85 & 10.54 & 11.42 & 14.39 & 15.41 & 6.37 & 7.00 & 7.56 & 9.62 & 10.64 \\
 &  & (17.95) & (7.61) & (18.36) & (22.57) & (53.12) & (0.57) & (0.19) & (0.61) & (0.37) & (1.30) & (0.33) & (0.14) & (0.59) & (0.05) & (1.16) \\ \hline
\multirow{6}{*}{RIGHT} & \multirow{2}{*}{MAE} & 154.87 & 196.95 & 233.18 & 255.24 & 428.05 & 9.91 & 11.31 & 12.48 & 13.12 & 17.51 & 6.64 & 7.74 & 8.67 & 8.97 & 12.77 \\
 &  & (13.12) & (20.96) & (26.58) & (15.12) & (41.79) & (0.45) & (0.58) & (0.79) & (0.40) & (0.93) & (0.39) & (0.40) & (0.65) & (0.29) & (0.94) \\
 & \multirow{2}{*}{MSE} & 146.85 & 215.06 & 260.37 & 304.71 & 452.61 & 9.57 & 11.96 & 13.27 & 14.47 & 18.15 & 6.34 & 8.04 & 9.12 & 10.28 & 13.31 \\
 &  & (24.39) & (23.92) & (15.32) & (49.88) & (45.87) & (0.86) & (0.66) & (0.49) & (1.29) & (1.15) & (0.87) & (0.54) & (0.34) & (1.10) & (1.26) \\
 & \multirow{2}{*}{Huber} & 149.14 & 179.72 & 246.29 & 279.70 & 436.29 & 9.72 & 10.84 & 12.88 & 13.90 & 17.50 & 6.54 & 7.41 & 8.85 & 9.77 & 12.77 \\
 &  & (7.74) & (10.57) & (16.02) & (17.45) & (78.90) & (0.31) & (0.37) & (0.48) & (0.44) & (1.86) & (0.19) & (0.25) & (0.63) & (0.24) & (1.79) \\ \hline
\multirow{6}{*}{Middle} & \multirow{2}{*}{MAE} & 116.14 & 133.44 & 129.55 & 138.95 & 150.88 & 8.38 & 8.93 & 8.97 & 9.21 & 9.57 & 5.44 & 5.75 & 6.22 & 5.93 & 6.35 \\
 &  & (2.57) & (5.05) & (1.37) & (5.22) & (3.66) & (0.13) & (0.15) & (0.13) & (0.11) & (0.12) & (0.04) & (0.04) & (0.72) & (0.08) & (0.11) \\
 & \multirow{2}{*}{MSE} & 119.90 & 133.27 & 128.84 & 138.36 & 149.82 & 8.45 & 8.94 & 8.89 & 9.32 & 9.53 & 5.57 & 5.91 & 5.75 & 6.16 & 6.33 \\
 &  & (6.23) & (5.18) & (3.01) & (6.10) & (5.28) & (0.18) & (0.22) & (0.21) & (0.27) & (0.07) & (0.18) & (0.05) & (0.10) & (0.24) & (0.03) \\
 & \multirow{2}{*}{Huber} & 121.78 & 131.43 & 131.38 & 140.40 & 149.25 & 8.62 & 8.92 & 8.96 & 9.28 & 9.65 & 5.54 & 5.76 & 5.83 & 6.15 & 6.35 \\
 &  & (4.75) & (3.84) & (2.20) & (6.18) & (0.70) & (0.14) & (0.15) & (0.08) & (0.19) & (0.09) & (0.07) & (0.10) & (0.05) & (0.18) & (0.12) \\ \hline
\multicolumn{2}{c|}{\multirow{2}{*}{CRM}} & 221.66 & 303.52 & 398.50 & 523.53 & 653.81 & 12.18 & 14.57 & 17.17 & 20.11 & 22.89 & 8.42 & 10.51 & 12.82 & 15.86 & 18.63 \\
\multicolumn{2}{c|}{} & (1.45) & (11.12) & (14.80) & (4.63) & (17.30) & (0.07) & (0.30) & (0.40) & (0.08) & (0.09) & (0.07) & (0.24) & (0.44) & (0.20) & (0.21) \\
\multicolumn{2}{c|}{\multirow{2}{*}{RANN}} & 125.04 & 126.02 & 129.86 & 139.83 & 148.25 & 8.69 & 8.73 & 8.89 & 9.32 & 9.69 & 5.62 & 5.69 & 5.92 & 6.82 & 6.42 \\
\multicolumn{2}{c|}{} & (1.09) & (1.74) & (1.00) & (3.41) & (2.33) & (0.04) & (0.07) & (0.05) & (0.22) & (0.11) & (0.12) & (0.07) & (0.03) & (1.01) & (0.09) \\
\multicolumn{2}{c|}{\multirow{2}{*}{SINN}} & 218.16 & 302.06 & 404.80 & 524.74 & 649.27 & 12.11 & 14.54 & 17.41 & 19.97 & 22.70 & 8.39 & 10.45 & 13.32 & 15.69 & 18.52 \\
\multicolumn{2}{c|}{} & (1.53) & (6.53) & (2.70) & (7.17) & (2.94) & (0.08) & (0.24) & (0.04) & (0.13) & (0.14) & (0.14) & (0.29) & (0.13) & (0.24) & (0.25) \\
\multicolumn{2}{c|}{\multirow{2}{*}{IN}} & 118.85 & 133.86 & 138.58 & 147.95 & 152.34 & 8.46 & 9.00 & 9.15 & 9.53 & 9.65 & 5.75 & 5.88 & 5.94 & 6.23 & 6.30 \\
\multicolumn{2}{c|}{} & (5.20) & (6.46) & (4.06) & (2.82) & (9.30) & (0.17) & (0.13) & (0.09) & (0.07) & (0.21) & (0.25) & (0.09) & (0.05) & (0.16) & (0.15) \\ \hline
\multicolumn{2}{c|}{\multirow{2}{*}{LM}} & \textbf{115.76} & \textbf{121.24} & \textbf{123.51} & \textbf{128.04} & \textbf{129.47} & \textbf{8.36} & \textbf{8.67} & \textbf{8.75} & \textbf{8.82} & \textbf{8.92} & \textbf{5.41} & \textbf{5.61} & \textbf{5.65} & \textbf{5.73} & \textbf{5.84} \\
\multicolumn{2}{c|}{} & \textbf{(1.64)} & \textbf{(4.73)} & \textbf{(2.78)} & \textbf{(3.52)} & \textbf{(7.15)} & \textbf{(0.07)} & \textbf{(0.03)} & \textbf{(0.14)} & \textbf{(0.16)} & \textbf{(0.14)} & \textbf{(0.03)} & \textbf{(0.02)} & \textbf{(0.13)} & \textbf{(0.07)} & \textbf{(0.11)}
\\
\bottomrule
\end{tabular}
}
}
\vspace{-0.5cm}
\end{table}
\begin{table}[!t]
\caption{All results on IMDB-WIKI.}
\label{cimdb}
\resizebox{1.00\textwidth}{!}{
\setlength{\tabcolsep}{3.0mm}{
\begin{tabular}{cc|ccccc|ccccc|ccccc}
\toprule
\multicolumn{2}{c|}{Metrics}                     & \multicolumn{5}{c|}{MSE}                                                                & \multicolumn{5}{c|}{MAE}                                                                & \multicolumn{5}{c}{GM}                                                                  \\ \hline
\multicolumn{2}{c|}{Approach}                    & $q=40$            & $q=50$            & $q=60$            & $q=70$            & $q=80$            & $q=40$            & $q=50$            & $q=60$            & $q=70$            & $q=80$            & $q=40$            & $q=50$            & $q=60$            & $q=70$            & $q=80$            \\ \hline
\multicolumn{2}{c|}{\multirow{2}{*}{Supervised}} & \multicolumn{5}{c|}{123.82}                                                             & \multicolumn{5}{c|}{8.38}                                                               & \multicolumn{5}{c}{5.40}                                                                \\
\multicolumn{2}{c|}{}                            & \multicolumn{5}{c|}{(3.06)}                                                             & \multicolumn{5}{c|}{(0.07)}                                                             & \multicolumn{5}{c}{(0.09)}                                                              \\ \hline
\multirow{6}{*}{LEFT}   & \multirow{2}{*}{MAE}   & 270.15          & 336.73          & 402.05          & 494.84          & 581.54          & 13.06           & 14.76           & 16.28           & 18.30           & 20.39           & 8.60            & 10.09           & 13.54           & 12.98           & 15.28           \\
                        &                        & (15.14)         & (30.92)         & (39.13)         & (28.64)         & (19.81)         & (0.53)          & (0.77)          & (1.10)          & (0.82)          & (0.46)          & (0.63)          & (0.53)          & (2.46)          & (0.93)          & (0.69)          \\
                        & \multirow{2}{*}{MSE}   & 255.40          & 305.92          & 358.15          & 383.77          & 421.16          & 12.67           & 13.95           & 15.21           & 15.86           & 16.66           & 8.24            & 9.35            & 10.34           & 10.91           & 11.70           \\
                        &                        & (13.45)         & (7.33)          & (25.61)         & (16.82)         & (94.72)         & (0.40)          & (0.60)          & (0.57)          & (0.36)          & (2.36)          & (0.43)          & (0.29)          & (0.42)          & (0.45)          & (2.36)          \\
                        & \multirow{2}{*}{Huber} & 291.25          & 320.37          & 385.99          & 512.73          & 596.39          & 13.39           & 14.24           & 15.82           & 18.57           & 20.71           & 8.71            & 11.40           & 10.83           & 13.46           & 15.57           \\
                        &                        & (14.98)         & (10.01)         & (41.84)         & (107.26)        & (46.08)         & (0.26)          & (0.42)          & (1.14)          & (2.54)          & (0.79)          & (0.17)          & (3.00)          & (1.07)          & (2.67)          & (0.47)          \\ \hline
\multirow{6}{*}{RIGHT}  & \multirow{2}{*}{MAE}   & 197.26          & 265.60          & 328.77          & 608.51          & 679.26          & 11.15           & 13.38           & 14.74           & 20.80           & 22.82           & 8.07            & 10.74           & 13.29           & 15.69           & 17.55           \\
                        &                        & (17.24)         & (27.79)         & (63.99)         & (76.85)         & (80.82)         & (0.62)          & (0.75)          & (1.55)          & (1.50)          & (2.21)          & (0.55)          & (1.61)          & (3.84)          & (1.31)          & (3.04)          \\
                        & \multirow{2}{*}{MSE}   & 214.42          & 280.99          & 357.06          & 497.47          & 643.99          & 11.71           & 13.50           & 15.29           & 18.82           & 21.38           & 7.82            & 9.17            & 10.48           & 13.63           & 15.55           \\
                        &                        & (10.40)         & (12.52)         & (64.59)         & (43.98)         & (85.80)         & (0.35)          & (0.35)          & (1.54)          & (1.05)          & (1.49)          & (0.31)          & (0.32)          & (1.18)          & (0.96)          & (1.24)          \\
                        & \multirow{2}{*}{Huber} & 198.97          & 243.81          & 379.34          & 536.87          & 491.39          & 11.23           & 12.82           & 16.19           & 19.63           & 18.24           & 8.48            & 10.84           & 14.15           & 14.14           & 12.83           \\
                        &                        & (6.47)          & (11.64)         & (45.10)         & (21.50)         & (103.54)        & (0.21)          & (0.31)          & (1.01)          & (0.67)          & (2.22)          & (0.16)          & (0.16)          & (2.10)          & (0.48)          & (1.88)          \\ \hline
\multirow{6}{*}{Middle}    & \multirow{2}{*}{MAE}   & 140.29          & 148.93          & 152.20          & 154.79          & 157.17          & 8.96            & 9.24            & 9.47            & 9.55            & 9.76            & 5.94            & 6.20            & 6.82            & 7.43            & 6.96            \\
                        &                        & (6.68)          & (3.00)          & (9.59)          & (4.29)          & (5.96)          & (0.10)          & (0.08)          & (0.36)          & (0.21)          & (0.22)          & (0.38)          & (0.51)          & (0.06)          & (0.71)          & (0.61)          \\
                        & \multirow{2}{*}{MSE}   & 135.57          & 142.68          & 144.34          & 153.10          & 153.80          & 8.89            & 9.10            & 9.23            & 9.61            & 9.79            & 5.97            & 5.92            & 6.45            & 6.61            & 6.97            \\
                        &                        & (2.77)          & (2.43)          & (2.60)          & (9.34)          & (3.57)          & (0.10)          & (0.06)          & (0.11)          & (0.30)          & (0.08)          & (0.54)          & (0.20)          & (0.70)          & (0.16)          & (0.43)          \\
                        & \multirow{2}{*}{Huber} & 142.21          & 148.52          & 153.97          & 152.52          & 154.77          & 9.04            & 9.22            & 9.49            & 9.49            & 9.70            & 5.71            & 6.05            & 6.25            & 6.12            & 6.41            \\
                        &                        & (6.33)          & (3.93)          & (2.31)          & (3.48)          & (3.58)          & (0.19)          & (0.15)          & (0.05)          & (0.12)          & (0.15)          & (0.13)          & (0.17)          & (0.24)          & (0.10)          & (0.14)          \\ \hline
\multicolumn{2}{c|}{\multirow{2}{*}{CRM}}        & 302.50          & 380.12          & 519.7           & 602.65          & 740.99          & 14.38           & 16.81           & 20.11           & 21.85           & 24.61           & 10.18           & 12.81           & 13.26           & 18.16           & 21.34           \\
\multicolumn{2}{c|}{}                            & (12.53)         & (15.30)         & (10.23)         & (7.20)          & (19.56)         & (0.42)          & (0.28)          & (0.20)          & (0.16)          & (0.78)          & (0.56)          & (0.46)          & (0.44)          & (0.39)          & (0.56)          \\
\multicolumn{2}{c|}{\multirow{2}{*}{RANN}}       & 137.37          & 140.79          & 145.40          & 150.11          & 166.33          & 8.98            & 8.98            & 9.32            & 9.52            & 10.09           & 5.82            & 6.14            & 6.74            & 6.85            & 6.63            \\
\multicolumn{2}{c|}{}                            & (3.09)          & (2.28)          & (2.20)          & (2.48)          & (4.02)          & (0.10)          & (0.12)          & (0.08)          & (0.07)          & (0.08)          & (0.12)          & (0.14)          & (0.18)          & (0.25           & (0.10)          \\
\multicolumn{2}{c|}{\multirow{2}{*}{SINN}}       & 314.49          & 385.29          & 515.80          & 629.51          & 754.28          & 14.79           & 16.73           & 19.84           & 22.36           & 24.73           & 10.88           & 12.59           & 15.77           & 17.31           & 20.28           \\
\multicolumn{2}{c|}{}                            & (4.08)          & (8.26)          & (9.95)          & (11.10)         & (15.20)         & (0.14)          & (0.19)          & (0.30)          & (0.23)          & (0.65)          & (0.23)          & (0.45)          & (0.55)          & (0.72)          & (0.49)          \\
\multicolumn{2}{c|}{\multirow{2}{*}{IN}}         & 147.09          & 148.71          & 152.66          & 155.73          & 156.04          & 9.25            & 9.28            & 9.51            & 9.59            & 9.79            & 6.45            & 7.08            & 6.60            & 6.90            & 7.61            \\
\multicolumn{2}{c|}{}                            & (0.59)          & (1.01)          & (2.54)          & (5.67)          & (2.89)          & (0.04)          & (0.09)          & (0.10)          & (0.17)          & (0.06)          & (0.24)          & (0.36)          & (0.42)          & (0.70)          & (0.15)          \\ \hline
\multicolumn{2}{c|}{\multirow{2}{*}{LM}}         & \textbf{133.98} & \textbf{134.15} & \textbf{141.45} & \textbf{148.19} & \textbf{146.52} & \textbf{8.75}   & \textbf{8.83}   & \textbf{9.07}   & \textbf{9.40}   & \textbf{9.39}   & \textbf{5.59}   & \textbf{5.79}   & \textbf{5.81}   & \textbf{6.10}   & \textbf{6.12}   \\
\multicolumn{2}{c|}{}                            & \textbf{(1.57)} & \textbf{(2.49)} & \textbf{(2.32)} & \textbf{(2.44)} & \textbf{(5.00)} & \textbf{(0.06)} & \textbf{(0.07)} & \textbf{(0.11)} & \textbf{(0.08)} & \textbf{(0.04)} & \textbf{(0.08)} & \textbf{(0.20)} & \textbf{(0.07)} & \textbf{(0.10)} & \textbf{(0.14)}
\\
\bottomrule
\end{tabular}
}
}
\end{table}
\begin{table}[!t]
\caption{All results on STS-B.}
\label{csts}
\resizebox{1.00\textwidth}{!}{
\setlength{\tabcolsep}{3.0mm}{
\begin{tabular}{cc|ccccc|ccccc|ccccc}
\toprule
\multicolumn{2}{c|}{Metrics}                     & \multicolumn{5}{c|}{MSE}                                                                & \multicolumn{5}{c|}{MAE}                                                                & \multicolumn{5}{c}{Pearson}                                                                   \\ \hline
\multicolumn{2}{c|}{Approach}                    & $q=3.0$           & $q=3.5$           & $q=4.0$           & $q=4.5$           & $q=5.0$           & $q=3.0$           & $q=3.5$           & $q=4.0$           & $q=4.5$           & $q=5.0$           & $q=3.0$           & $q=3.5$           & $q=4.0$           & $q=4.5$           & $q=5.0$           \\ \hline
\multicolumn{2}{c|}{\multirow{2}{*}{Supervised}} & \multicolumn{5}{c|}{1.16}                                                               & \multicolumn{5}{c|}{0.87}                                                               & \multicolumn{5}{c}{0.71}                                                                \\
\multicolumn{2}{c|}{}                            & \multicolumn{5}{c|}{(0.05)}                                                             & \multicolumn{5}{c|}{(0.02)}                                                             & \multicolumn{5}{c}{(0.01)}                                                              \\ \hline
\multirow{6}{*}{LEFT} & \multirow{2}{*}{MAE} & 1.54 & 1.88 & 2.35 & 2.63 & 3.03 & 1.01 & 1.11 & 1.26 & 1.33 & 1.43 & \textbf{0.69} & \textbf{0.66} & 0.64 & 0.62 & 0.59 \\
 &  & (0.06) & (0.06) & (0.15) & (0.13) & (0.06) & (0.02) & (0.02) & (0.05) & (0.03) & (0.02) & \textbf{(0.01)} & \textbf{(0.01)} & (0.01) & (0.02) & (0.01) \\
 & \multirow{2}{*}{MSE} & 2.02 & 2.16 & 2.55 & 2.88 & 3.13 & 1.17 & 1.20 & 1.32 & 1.40 & 1.47 & 0.66 & 0.65 & 0.63 & 0.61 & 0.58 \\
 &  & (0.08) & (0.07) & (0.12) & (0.11) & (0.12) & (0.03) & (0.02) & (0.03) & (0.03) & (0.04) & (0.01) & (0.01) & (0.01) & (0.01) & (0.03) \\
 & \multirow{2}{*}{Huber} & 1.92 & 2.26 & 2.51 & 2.76 & 3.05 & 1.14 & 1.23 & 1.31 & 1.37 & 1.44 & 0.67 & 0.65 & 0.63 & 0.61 & 0.59 \\
 &  & (0.13) & (0.10) & (0.14) & (0.10) & (0.15) & (0.04) & (0.03) & (0.05) & (0.03) & (0.04) & (0.01) & (0.01) & (0.01) & (0.01) & (0.02) \\ \hline
\multirow{6}{*}{RIGHT} & \multirow{2}{*}{MAE} & 1.75 & 1.87 & 2.09 & 2.05 & 2.31 & 1.06 & 1.12 & 1.17 & 1.16 & 1.23 & 0.68 & 0.63 & 0.62 & 0.55 & 0.55 \\
 &  & (0.14) & (0.19) & (0.18) & (0.28) & (0.40) & (0.04) & (0.07) & (0.05) & (0.08) & (0.10) & (0.01) & (0.05) & (0.02) & (0.09) & (0.05) \\
 & \multirow{2}{*}{MSE} & 1.53 & 1.74 & 1.78 & 2.00 & 2.26 & 1.00 & 1.06 & 1.08 & 1.14 & 1.21 & 0.63 & 0.60 & 0.56 & 0.53 & 0.52 \\
 &  & (0.05) & (0.13) & (0.19) & (0.17) & (0.16) & (0.02) & (0.03) & (0.06) & (0.05) & (0.03) & (0.01) & (0.02) & (0.05) & (0.02) & (0.02) \\
 & \multirow{2}{*}{Huber} & 1.56 & 1.77 & 1.86 & 1.91 & 2.39 & 1.01 & 1.07 & 1.10 & 1.12 & 1.24 & 0.64 & 0.62 & 0.58 & 0.56 & 0.52 \\
 &  & (0.08) & (0.11) & (0.27) & (0.15) & (0.08) & (0.03) & (0.04) & (0.08) & (0.05) & (0.03) & (0.02) & (0.01) & (0.02) & (0.02) & (0.02) \\ \hline
\multirow{6}{*}{Middle} & \multirow{2}{*}{MAE} & \textbf{1.18} & 1.23 & \textbf{1.27} & \textbf{1.31} & 1.41 & \textbf{0.88} & 0.91 & \textbf{0.92} & \textbf{0.94} & 0.98 & \textbf{0.69} & \textbf{0.66} & \textbf{0.66} & \textbf{0.64} & \textbf{0.62} \\
 &  & \textbf{(0.03)} & (0.05) & \textbf{(0.07)} & \textbf{(0.03)} & (0.05) & \textbf{(0.01)} & (0.02) & \textbf{(0.03)} & \textbf{(0.01)} & (0.02) & \textbf{(0.01)} & \textbf{(0.02)} & \textbf{(0.01)} & \textbf{(0.01)} & \textbf{(0.02)} \\
 & \multirow{2}{*}{MSE} & 1.31 & 1.33 & 1.41 & 1.41 & 1.49 & 0.94 & 0.95 & 0.98 & 0.99 & 1.02 & 0.65 & 0.64 & 0.62 & 0.61 & 0.59 \\
 &  & (0.02) & (0.05) & (0.04) & (0.01) & (0.03) & (0.01) & (0.02) & (0.01) & (0.01) & (0.01) & (0.01) & (0.02) & (0.01) & (0.01) & (0.01) \\
 & \multirow{2}{*}{Huber} & 1.37 & 1.30 & 1.37 & 1.39 & 1.48 & 0.96 & 0.94 & 0.97 & 0.97 & 1.01 & 0.64 & 0.64 & 0.62 & 0.62 & 0.59 \\
 &  & (0.16) & (0.04) & (0.08) & (0.02) & (0.03) & (0.06) & (0.02) & (0.03) & (0.01) & (0.01) & (0.03) & (0.01) & (0.01) & (0.01) & (0.01) \\ \hline
\multicolumn{2}{c|}{\multirow{2}{*}{CRM}} & 1.36 & 1.50 & 1.53 & 1.52 & 1.56 & 0.96 & 1.00 & 1.02 & 1.02 & 1.05 & 0.64 & 0.60 & 0.59 & 0.58 & 0.56 \\
\multicolumn{2}{c|}{} & (0.02) & (0.10) & (0.08) & (0.06) & (0.06) & (0.01) & (0.04) & (0.03) & (0.02) & (0.03) & (0.01) & (0.03) & (0.03) & (0.03) & (0.02) \\
\multicolumn{2}{c|}{\multirow{2}{*}{RANN}} & 1.44 & 1.52 & 1.55 & 1.63 & 1.62 & 0.98 & 1.01 & 1.02 & 1.05 & 1.06 & 0.59 & 0.57 & 0.55 & 0.52 & 0.51 \\
\multicolumn{2}{c|}{} & (0.05) & (0.04) & (0.06) & (0.06) & (0.06) & (0.02) & (0.02) & (0.03) & (0.02) & (0.03) & (0.01) & (0.02) & (0.02) & (0.02) & (0.02) \\
\multicolumn{2}{c|}{\multirow{2}{*}{SINN}} & 1.37 & 1.50 & 1.50 & 1.54 & 1.61 & 0.96 & 1.00 & 1.01 & 1.03 & 1.06 & 0.64 & 0.60 & 0.60 & 0.58 & 0.54 \\
\multicolumn{2}{c|}{} & (0.04) & (0.05) & (0.10) & (0.05) & (0.09) & (0.01) & (0.01) & (0.03) & (0.02) & (0.03) & (0.01) & (0.01) & (0.02) & (0.01) & (0.03) \\
\multicolumn{2}{c|}{\multirow{2}{*}{IN}} & 1.26 & 1.39 & 1.37 & 1.37 & 1.41 & 0.91 & 0.96 & 0.96 & 0.96 & 0.98 & 0.67 & 0.62 & 0.62 & 0.62 & 0.61 \\
\multicolumn{2}{c|}{} & (0.12) & (0.21) & (0.14) & (0.08) & (0.07) & (0.04) & (0.06) & (0.04) & (0.03) & (0.03) & (0.04) & (0.06) & (0.05) & (0.01) & (0.01) \\ \hline
\multicolumn{2}{c|}{\multirow{2}{*}{LM}} & \textbf{1.19} & \textbf{1.22} & \textbf{1.27} & \textbf{1.31} & \textbf{1.37} & \textbf{0.88} & \textbf{0.90} & 0.93 & \textbf{0.94} & \textbf{0.96} & \textbf{0.69} & \textbf{0.66} & 0.65 & \textbf{0.64} & \textbf{0.62} \\
\multicolumn{2}{c|}{} & \textbf{(0.04)} & \textbf{(0.02)} & \textbf{(0.04)} & \textbf{(0.04)} & \textbf{(0.05)} & \textbf{(0.01)} & \textbf{(0.01)} & (0.01) & \textbf{(0.01)} & \textbf{(0.01)} & \textbf{(0.01)} & \textbf{(0.01)} & (0.01) & \textbf{(0.02)} & \textbf{(0.02)}
\\
\bottomrule
\end{tabular}
}
}
\end{table}
\section{Additional Results}
\label{all_result}
In the experiment of main paper, we show some of the experiments on the three datasets AgeDB, IMDB-WIKI and STS-B. Here, we provide the complete evaluation results on the nine used datasets, which include more evaluation metrics in addition to the results in the main paper.
\subsection{Complete Results on AgeDB}
We show the complete results of AgeDB in Table \ref{cagedb}. In the table, we show the test performance (mean and std) of each method with ResNet-50, evaluated using MSE, MAE and GM. We repeat the sampling-and-training process 3 times. The best performance is highlighted in bold. In addition, Our Vs Sup means the mean error between our proposed method and the supervised method (Our - Supervised). Specifically, we use red color to indicate that our method is weaker than the supervised method and green color to indicate that our method is better than the supervised method. As the table shows, our proposed method LM has significant advantages in all evaluation metrics.
\subsection{Complete Results on IMDB-WIKI}

We show the complete results of IMDB-WIKI in Table \ref{cimdb}. Similar to AgeDB, we evaluate each method using MSE, MAE and GM. We repeat the sampling-and-training process 3 times. It is worth noting that we chose different q values for AgeDB and IMDB-WIKI, although both are age predictions. IMDB-WIKI has a larger training set, and we want to test our method on a large dataset in a harsh environment (large $q$). As shown in the table \ref{cimdb}, our proposed method LM has significant advantages in all evaluation metrics. Even in the case of $q=80$, the performance does not degrade excessively after learning from a large number of training sets.
\subsection{Complete Results on STS-B}
We show the complete results of STS-B in Table \ref{csts}. In the table, we show the test performance (mean and std) of each method with BiLSTM + GloVe word embeddings baseline, evaluated using MSE, MAE and Pearson. Unlike AgeDB and IMDB-WIKI, STS-B has a smaller span of labels, so we can only choose smaller $q$ values to generate interval data. As shown in the table, the difference between methods is not significant when $q$ is small. As $q$ keeps increasing, all the methods tend to decrease in performance, while our method decreases more slowly and has a significant advantage at large $q$.
\begin{table*}[!t]
\caption{Comparison of MMIT and LM with MLP model}
\label{mmit}
\resizebox{1.00\textwidth}{!}{
\setlength{\tabcolsep}{3.5mm}{
\begin{tabular}{cc|ccc|ccc|ccc|ccc}
\toprule
\multicolumn{2}{c|}{Metrics} & \multicolumn{3}{c|}{MSE} & \multicolumn{3}{c|}{MAE} & \multicolumn{3}{c|}{Pearson} & \multicolumn{3}{c}{GM} \\ \hline
\multicolumn{2}{c|}{Approach} & Linear & Square & LM\_MLP & Linear & Square & LM\_MLP & Linear & Square & LM\_MLP & Linear & Square & LM\_MLP \\ \hline
\multicolumn{1}{c|}{\multirow{4}{*}{Abalone}} & \multirow{2}{*}{q=30} & 6.41 & 6.77 & \textbf{4.66} & 1.88 & 1.98 & \textbf{1.50} & 0.63 & 0.63 & \textbf{0.74} & 1.17 & 1.28 & \textbf{0.91} \\
\multicolumn{1}{c|}{} &  & (0.31) & (0.82) & \textbf{(0.49)} & (0.03) & (0.15) & \textbf{(0.05)} & (0.02) & (0.04) & \textbf{(0.02)} & (0.03) & (0.09) & \textbf{(0.07)} \\
\multicolumn{1}{c|}{} & \multirow{2}{*}{q=40} & 7.03 & 6.99 & \textbf{4.81} & 2.01 & 2.02 & \textbf{1.53} & 0.61 & 0.62 & \textbf{0.75} & 1.36 & 1.37 & \textbf{0.90} \\
\multicolumn{1}{c|}{} &  & (0.79) & (0.69) & \textbf{(0.42)} & (0.17) & (0.12) & \textbf{(0.08)} & (0.02) & (0.01) & \textbf{(0.01)} & (0.13) & (0.13) & \textbf{(0.04)} \\ \hline
\multicolumn{1}{c|}{\multirow{4}{*}{Airfoil}} & \multirow{2}{*}{q=30} & 18.74 & 17.98 & \textbf{16.72} & 3.33 & 3.25 & \textbf{3.09} & 0.80 & 0.80 & \textbf{0.81} & 2.15 & 2.03 & \textbf{1.93} \\
\multicolumn{1}{c|}{} &  & (3.41) & (1.70) & \textbf{(3.42)} & (0.30) & (0.14) & \textbf{(0.35)} & (0.05) & (0.04) & \textbf{(0.04)} & (0.22) & (0.12) & \textbf{(0.27)} \\
\multicolumn{1}{c|}{} & \multirow{2}{*}{q=40} & 23.61 & 19.57 & \textbf{18.31} & 3.72 & 3.46 & \textbf{3.24} & 0.74 & 0.78 & \textbf{0.80} & 2.38 & 2.25 & \textbf{2.08} \\
\multicolumn{1}{c|}{} &  & (3.20) & (1.98) & \textbf{(2.63)} & (0.26) & (0.16) & \textbf{(0.27)} & (0.03) & (0.02) & \textbf{(0.03)} & (0.25) & (0.16) & \textbf{(0.22)} \\ \hline
\multicolumn{1}{c|}{\multirow{4}{*}{Auto-mpg}} & \multirow{2}{*}{q=30} & 17.82 & 15.51 & \textbf{9.63} & 3.16 & 2.91 & \textbf{2.19} & 0.84 & 0.87 & \textbf{0.92} & 1.95 & 1.81 & \textbf{1.36} \\
\multicolumn{1}{c|}{} &  & (2.72) & (2.23) & \textbf{(1.62)} & (0.25) & (0.25) & \textbf{(0.18)} & (0.04) & (0.01) & \textbf{(0.01)} & (0.23) & (0.25) & \textbf{(0.16)} \\
\multicolumn{1}{c|}{} & \multirow{2}{*}{q=40} & 19.07 & 25.29 & \textbf{11.11} & 3.24 & 3.78 & \textbf{2.31} & 0.85 & 0.80 & \textbf{0.92} & 2.00 & 2.50 & \textbf{1.40} \\
\multicolumn{1}{c|}{} &  & (4.45) & (5.37) & \textbf{(3.03)} & (0.36) & (0.44) & \textbf{(0.30)} & (0.03) & (0.05) & \textbf{(0.01)} & (0.25) & (0.38) & \textbf{(0.12)} \\ \hline
\multicolumn{1}{c|}{\multirow{4}{*}{Housing}} & \multirow{2}{*}{q=30} & 32.85 & 32.00 & \textbf{22.13} & 3.94 & 3.78 & \textbf{3.47} & 0.80 & 0.82 & \textbf{0.86} & 2.36 & 2.31 & \textbf{2.08} \\
\multicolumn{1}{c|}{} &  & (8.39) & (9.79) & \textbf{(3.71)} & (0.32) & (0.38) & \textbf{(0.57)} & (0.05) & (0.05) & \textbf{(0.04)} & (0.12) & (0.13) & \textbf{(0.32)} \\
\multicolumn{1}{c|}{} & \multirow{2}{*}{q=40} & 30.67 & 31.79 & \textbf{24.53} & 3.99 & 4.11 & \textbf{3.49} & 0.82 & 0.81 & \textbf{0.85} & 2.56 & 2.61 & \textbf{2.22} \\
\multicolumn{1}{c|}{} &  & (5.06) & (5.00) & \textbf{(8.00)} & (0.49) & (0.38) & \textbf{(0.48)} & (0.04) & (0.04) & \textbf{(0.05)} & (0.51) & (0.32) & \textbf{(0.31)} \\ \hline
\multicolumn{1}{c|}{\multirow{4}{*}{Concrete}} & \multirow{2}{*}{q=70} & 106.74 & 105.08 & \textbf{58.45} & 8.02 & 8.00 & \textbf{5.86} & 0.80 & 0.80 & \textbf{0.89} & 5.21 & 5.16 & \textbf{3.68} \\
\multicolumn{1}{c|}{} &  & (8.94) & (14.70) & \textbf{(3.52)} & (0.33) & (0.53) & \textbf{(0.41)} & (0.03) & (0.02) & \textbf{(0.01)} & (0.26) & (0.50) & \textbf{(0.32)} \\
\multicolumn{1}{c|}{} & \multirow{2}{*}{q=80} & 107.80 & 109.25 & \textbf{59.47} & 8.01 & 8.03 & \textbf{5.71} & 0.80 & 0.80 & \textbf{0.89} & 5.25 & 5.24 & \textbf{3.57} \\
\multicolumn{1}{c|}{} &  & (13.61) & (11.09) & \textbf{(7.81)} & (0.41) & (0.51) & \textbf{(0.34)} & (0.04) & (0.02) & \textbf{(0.02)} & (0.25) & (0.47) & \textbf{(0.27)}
\\
\bottomrule
\end{tabular}
}
}
\end{table*}
\begin{table*}[!t]
\caption{Comparison of AVGL\_MSE and AVGL\_MAE}
\label{avglmse_avgl_mae}
\resizebox{1.00\textwidth}{!}{
\setlength{\tabcolsep}{3.5mm}{
\begin{tabular}{cc|cc|cc|cc|cc}
\toprule
\multicolumn{2}{c|}{Metric}                                              & \multicolumn{2}{c|}{MSE}    & \multicolumn{2}{c|}{MAE}          & \multicolumn{2}{c|}{Pearson}      & \multicolumn{2}{c}{GM}      \\ \hline
\multicolumn{2}{c|}{Approach}                                             & AVGL\_MSE & AVGL\_MAE       & AVGL\_MSE       & AVGL\_MAE       & AVGL\_MSE       & AVGL\_MAE       & AVGL\_MSE & AVGL\_MAE       \\ \hline
\multicolumn{1}{c|}{\multirow{4}{*}{Abalone}}     & \multirow{2}{*}{$q=30$} & 6.41      & \textbf{4.66}   & 1.94            & \textbf{1.50}   & 0.73            & \textbf{0.74}   & 1.04      & \textbf{0.91}   \\
\multicolumn{1}{c|}{}                             &                       & (0.43)    & \textbf{(0.49)} & (0.06)          & \textbf{(0.05)} & (0.03)          & \textbf{(0.02)} & (0.05)    & \textbf{(0.07)} \\
\multicolumn{1}{c|}{}                             & \multirow{2}{*}{$q=40$} & 7.77      & \textbf{4.81}   & 1.96            & \textbf{1.53}   & 0.73            & \textbf{0.75}   & 1.18      & \textbf{0.90}   \\
\multicolumn{1}{c|}{}                             &                       & (0.68)    & \textbf{(0.42)} & (0.08)          & \textbf{(0.08)} & (0.01)          & \textbf{(0.01)} & (0.05)    & \textbf{(0.04)} \\ \hline
\multicolumn{1}{c|}{\multirow{4}{*}{Airfoil}}     & \multirow{2}{*}{$q=30$} & 19.24     & \textbf{16.72}  & 3.41            & \textbf{3.09}   & 0.78            & \textbf{0.81}   & 2.21      & \textbf{1.93}   \\
\multicolumn{1}{c|}{}                             &                       & (1.70)    & \textbf{(3.42)} & (0.24)          & \textbf{(0.35)} & (0.04)          & \textbf{(0.04)} & (0.24)    & \textbf{(0.27)} \\
\multicolumn{1}{c|}{}                             & \multirow{2}{*}{$q=40$} & 24.37     & \textbf{18.31}  & 3.99            & \textbf{3.24}   & 0.77            & \textbf{0.80}   & 2.74      & \textbf{2.08}   \\
\multicolumn{1}{c|}{}                             &                       & (1.12)    & \textbf{(2.63)} & (0.11)          & \textbf{(0.27)} & (0.01)          & \textbf{(0.03)} & (0.21)    & \textbf{(0.22)} \\ \hline
\multicolumn{1}{c|}{\multirow{4}{*}{Auto-mpg}}    & \multirow{2}{*}{$q=30$} & 11.63     & \textbf{9.63}   & 2.50            & \textbf{2.19}   & \textbf{0.92}   & \textbf{0.92}   & 1.75      & \textbf{1.36}   \\
\multicolumn{1}{c|}{}                             &                       & (1.49)    & \textbf{(1.62)} & (0.29)          & \textbf{(0.18)} & \textbf{(0.01)} & \textbf{(0.01)} & (0.17)    & \textbf{(0.16)} \\
\multicolumn{1}{c|}{}                             & \multirow{2}{*}{$q=40$} & 20.08     & \textbf{11.11}  & 3.43            & \textbf{2.31}   & 0.91            & \textbf{0.92}   & 2.42      & \textbf{1.40}   \\
\multicolumn{1}{c|}{}                             &                       & (5.51)    & \textbf{(3.03)} & (0.49)          & \textbf{(0.30)} & (0.02)          & \textbf{(0.01)} & (0.43)    & \textbf{(0.12)} \\ \hline
\multicolumn{1}{c|}{\multirow{4}{*}{Housing}}     & \multirow{2}{*}{$q=30$} & 25.69     & \textbf{22.13}  & \textbf{3.44}   & 3.47            & 0.85            & \textbf{0.86}   & 2.27      & \textbf{2.08}   \\
\multicolumn{1}{c|}{}                             &                       & (10.69)   & \textbf{(3.71)} & \textbf{(0.43)} & (0.57)          & (0.05)          & \textbf{(0.04)} & (0.16)    & \textbf{(0.32)} \\
\multicolumn{1}{c|}{}                             & \multirow{2}{*}{$q=40$} & 31.74     & \textbf{24.53}  & 4.07            & \textbf{3.49}   & 0.83            & \textbf{0.85}   & 2.55      & \textbf{2.22}   \\
\multicolumn{1}{c|}{}                             &                       & (4.97)    & \textbf{(8.00)} & (0.39)          & \textbf{(0.48)} & (0.03)          & \textbf{(0.05)} & (0.43)    & \textbf{(0.31)} \\ \hline
\multicolumn{1}{c|}{\multirow{4}{*}{Concrete}}    & \multirow{2}{*}{$q=70$} & 69.04     & \textbf{58.45}  & 6.55            & \textbf{5.86}   & 0.88            & \textbf{0.89}   & 4.26      & \textbf{3.68}   \\
\multicolumn{1}{c|}{}                             &                       & (7.92)    & \textbf{(3.52)} & (0.34)          & \textbf{(0.41)} & (0.02)          & \textbf{(0.01)} & (0.38)    & \textbf{(0.32)} \\
\multicolumn{1}{c|}{}                             & \multirow{2}{*}{$q=80$} & 86.05     & \textbf{59.47}  & 7.29            & \textbf{5.71}   & 0.86            & \textbf{0.89}   & 5.02      & \textbf{3.57}   \\
\multicolumn{1}{c|}{}                             &                       & (7.66)    & \textbf{(7.81)} & (0.26)          & \textbf{(0.34)} & (0.03)          & \textbf{(0.02)} & (0.14)    & \textbf{(0.27)} \\ \hline
\multicolumn{1}{c|}{\multirow{4}{*}{Power-plant}} & \multirow{2}{*}{$q=60$} & 31.70     & \textbf{23.54}  & 4.52            & \textbf{3.84}   & 0.95            & \textbf{0.96}   & 3.26      & \textbf{2.62}   \\
\multicolumn{1}{c|}{}                             &                       & (2.89)    & \textbf{(1.41)} & (0.19)          & \textbf{(0.11)} & (0.00)          & \textbf{(0.00)} & (0.19)    & \textbf{(0.03)} \\
\multicolumn{1}{c|}{}                             & \multirow{2}{*}{$q=70$} & 47.59     & \textbf{24.91}  & 5.55            & \textbf{3.96}   & 0.95            & \textbf{0.96}   & 4.07      & \textbf{2.76}   \\
\multicolumn{1}{c|}{}                             &                       & (2.86)    & \textbf{(0.60)} & (0.13)          & \textbf{(0.05)} & (0.00)          & \textbf{(0.00)} & (0.29)    & \textbf{(0.14)}
\\
\bottomrule
\end{tabular}
}
}
\end{table*}
\subsection{Complete Results on UCI Benchmark Datasets}
Table \ref{cmlp} and Table \ref{clinear} show the mean squared error with standard deviation on the test set using the MLP model and the linear model, respectively. We evaluate each method using MSE, MAE, Pearson correlation and GM. We repeat the sampling-and-training process 5 times. As the table shows, our proposed method LM has significant advantages in all evaluation metrics. In particular, by comparing the experimental results reported in Table \ref{cmlp} and Table \ref{clinear} we can observe that training with the MLP model is generally better than training with the linear model. This observation is consistent with the common knowledge that MLP models are more powerful than linear models. It is worth noting that we also compare with the maximum margin interval trees method (MMIT\cite{drouin2017maximum}), which is similar to our limiting method. They both want to limit the predicted values to the interval. Table \ref{mmit} shows the results of our method with the MLP model and MMIT with linear (Linear) and squared (Square) hinge loss variants.
\subsection{Comparison between AVGL\_MSE and AVGL\_MAE}
\label{comavgl}
Table \ref{avglmse_avgl_mae} shows the comparison between AVGL\_MSE and AVGL\_MAE on the used UCI benchmark datasets with the MLP model, where AVGL\_MSE and AVGL\_MAE substitute the regression loss function in the average loss method with MSE and MAE, respectively. As shown in the table, AVGL\_MAE is significantly better than AVGL\_MSE.
\begin{table*}[ht]
\centering
\caption{Complete evaluation results on UCI benchmark datasets with MLP model}
\label{cmlp}
\resizebox{1.00\textwidth}{!}{
\setlength{\tabcolsep}{4.0mm}{

}
}
\end{table*}

\end{document}